\tikzset{cross/.style={cross out, draw=black, minimum size=2*(#1-\pgflinewidth), inner sep=0pt, outer sep=0pt},
%default radius will be 1pt. 
cross/.default={2pt}}
\setlist[enumerate]{noitemsep, topsep=0pt}
\theoremstyle{plain}
\newtheorem{theorem}{Theorem}
\numberwithin{theorem}{section}
\numberwithin{corollary}{section}
\newtheorem{lemma}{Lemma}
\numberwithin{lemma}{section}
\newtheorem{proposition}{Proposition}
\numberwithin{proposition}{section}
\theoremstyle{definition}
\newtheorem{definition}{Definition}
\numberwithin{definition}{section} % important bit
\numberwithin{example}{section} % important bit
\theoremstyle{remark}
\numberwithin{remark}{section} % important bit
\def\Pr{\mathop{\rm Pr}\nolimits}
\title{Generating and Sampling Orbits for Lifted Probabilistic Inference}
\author{ \textbf{Steven Holtzen}
  \and \textbf{Todd Millstein}
  \and \textbf{Guy Van den Broeck}
  \AND Computer Science Department \\ University of California, Los Angeles \\
  \texttt{\{sholtzen,todd,guyvdb\}@cs.ucla.edu}
}
\tikzset{
  annotated cuboid/.pic={
    \tikzset{%
      every edge quotes/.append style={midway, auto},
      /cuboid/.cd,
      #1
    }
    \draw [every edge/.append style={pic actions, densely dashed, opacity=.5}, pic actions]
    (0,0,0) coordinate (o) -- ++(-\cubescale*\cubex,0,0) coordinate (a) -- ++(0,-\cubescale*\cubey,0) coordinate (b) edge coordinate [pos=1] (g) ++(0,0,-\cubescale*\cubez)  -- ++(\cubescale*\cubex,0,0) coordinate (c) -- cycle
    (o) -- ++(0,0,-\cubescale*\cubez) coordinate (d) -- ++(0,-\cubescale*\cubey,0) coordinate (e) edge (g) -- (c) -- cycle
    (o) -- (a) -- ++(0,0,-\cubescale*\cubez) coordinate (f) edge (g) -- (d) -- cycle;
  },
  /cuboid/.search also={/tikz},
  /cuboid/.cd,
  width/.store in=\cubex,
  height/.store in=\cubey,
  depth/.store in=\cubez,
  units/.store in=\cubeunits,
  scale/.store in=\cubescale,
  width=5,
  height=5,
  depth=5,
  units=cm,
  scale=.1,
}
\newcommand{\orb}[0]{\mathrm{Orb}}
\newcommand{\fix}[0]{\mathrm{Fix}}
\newcommand{\stab}[0]{\mathrm{Stab}}
\newcommand{\varX}[0]{\mathbf{X}}
\newcommand{\varx}[0]{\mathbf{x}}
\newcommand{\evidence}[0]{\mathbf{e}}
\newcommand{\bool}[0]{\mathbb{B}}
\newcommand{\group}[0]{\mathcal{G}}
\newcommand{\col}[0]{\mathtt{color}}
\newcommand{\graph}[0]{\mathtt{G}}
\newcommand{\fgraph}[0]{\mathcal{F}}
\newcommand{\aut}[0]{\mathbb{A}}
\newcommand{\true}[0]{\mathtt{T}}
\newcommand{\false}[0]{\mathtt{F}}
\newcommand{\poly}[0]{\mathrm{poly}}
\DeclareMathOperator*{\argmax}{arg\,max}
\tikzstyle{bdd}=[
\tikzstyle{bddnode}=[
\tikzstyle{highedge}=[
\tikzstyle{lowedge}=[
\tikzstyle{bddterminal}=[
\begin{document}
\maketitle
\begin{abstract}
  A key goal in the design of probabilistic inference algorithms is identifying
and exploiting properties of the distribution that make inference tractable.
Lifted inference algorithms identify \emph{symmetry} as a property that enables
efficient inference and seek to scale with the degree of symmetry of a
probability model. A limitation of existing exact lifted inference techniques is
that they do not apply to non-relational representations like factor graphs. In
this work we provide the first example of an exact lifted inference algorithm
for arbitrary discrete factor graphs. In addition we describe a lifted
Markov-Chain Monte-Carlo algorithm that provably mixes rapidly in the degree
of symmetry of the distribution.
% Our methods extract symmetries from factor graphs by
% computing colored graph automorphisms.
\end{abstract}

\section{INTRODUCTION}
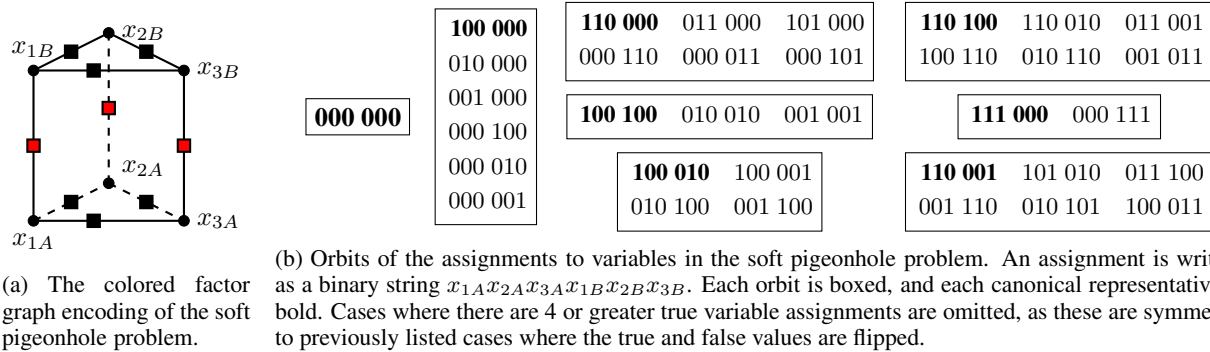
\begin{figure*}[ht]
  \begin{subfigure}[b]{0.2\linewidth}
   \centering
   
 \begin{tikzpicture}[factor node/.style={rectangle,draw,thick,solid, scale=0.7}]

   \node[factor node, fill=red] at (0, 1.5) {};
   \node[factor node, fill=red] at (-1, 1) {};
   \node[factor node, fill=red] at (1, 1) {};

   \node[factor node, fill=black] at (-0.5, 0.25) {};
   \node[factor node, fill=black] at (0.5, 0.25) {};
   \node[factor node, fill=black] at (-0.2, 0) {};

   \node[factor node, fill=black] at (-0.5, 2.25) {};
   \node[factor node, fill=black] at (0.5, 2.25) {};
   \node[factor node, fill=black] at (-0.2, 2) {};

   \begin{scope}[on background layer]
     \draw[dashed, thick] (0, 0.5) -- (0, 2.5);
     \draw[thick] (1, 0) -- (1, 2);
     \draw[thick] (-1, 0) -- (-1, 2);

     \draw[dashed, thick] (-1,0) -- (0,0.5);
     \draw[dashed, thick] (0,0.5) -- (1,0);
     \draw[thick] (-1, 0) -- (1, 0);

     \draw[thick] (-1,2) -- (0,2.5);
     \draw[thick] (0,2.5) -- (1,2);
     \draw[thick] (-1, 2) -- (1, 2);
    
    \draw[fill=black] (-1, 0) circle (2pt) node[below=0.5mm] {$x_{1A}$};
    \draw[fill=black] (0, 0.5) circle (2pt)  node[right=0.5mm,yshift=0.2cm] {$x_{2A}$};
    \draw[fill=black] (1, 2) circle (2pt) node[right=0.5mm] {$x_{3B}$};
    \draw[fill=black] (1, 0) circle (2pt) node[right=0.5mm] {$x_{3A}$};
    \draw[fill=black] (0, 2.5) circle (2pt) node[right=0.5mm] {$x_{2B}$};
    \draw[fill=black] (-1, 2) circle (2pt) node[above=0.5mm] {$x_{1B}$};
   \end{scope}
\end{tikzpicture}
\caption{The colored factor graph encoding of the soft pigeonhole problem.}
\label{fig:pigeongraph}
   
 \end{subfigure}~~~
 \begin{subfigure}[b]{0.78\linewidth}
   \centering
    \begin{tikzpicture}[main node/.style={rectangle,draw}]
      \node[main node] at(0,0) (0) {$\textbf{000~000}$};
      \node[main node] (1) [right of=0, xshift=2em] {
        ${\small
          \begin{aligned}
          \textbf{100~000}\\
          010~000\\
          001~000\\
          000~100\\
          000~010\\
          000~001\\
        \end{aligned}}$
    };

    \node[main node] (22) [right of=1, xshift=6em, yshift=0em, ] {
        ${\small
          \begin{aligned}
          \textbf{100~100} \quad 010~010 \quad 001~001
        \end{aligned}}$
    };

    \node[main node] (21) [above of=22] {
        ${\small
          \begin{aligned}
          \textbf{110~000}\quad
          011~000\quad
          101~000\\
          000~110\quad
          000~011\quad
          000~101\\
        \end{aligned}}$
    };

    \node[main node] (23) [below of=22] {
        ${\small
          \begin{aligned}
          \textbf{100~010} \quad 100~001\\
          010~100 \quad 001~100
        \end{aligned}}$
    };

    \node[main node] (31) [right of=22, xshift=10em, yshift=0em] {
        ${\small
          \begin{aligned}
          \textbf{111~000} \quad 000~111
        \end{aligned}}$
    };

    \node[main node] (32) [above of=31] {
        ${\small
          \begin{aligned}
          \textbf{110~100} \quad 110~010 \quad 011~001\\
          100~110 \quad 010~110 \quad 001~011
        \end{aligned}}$
    };

    \node[main node] (32) [below of=31] {
        ${\small
          \begin{aligned}
            \textbf{110~001} \quad 101~010 \quad 011~100 \\
            001~110 \quad 010~101 \quad 100~011
        \end{aligned}}$
    };

   \end{tikzpicture}
   \caption{Orbits of the assignments to variables in the soft pigeonhole problem. An
     assignment is written as a binary string
     $x_{1A}x_{2A}x_{3A}x_{1B}x_{2B}x_{3B}$. Each orbit is boxed, and each canonical
     representative is bold. 
     Cases where there are 4 or greater true variable assignments are omitted, as these
     are symmetric to previously listed cases where the true and false values
     are flipped.
   }
   \label{fig:pigeonorbit}
  
 \end{subfigure}
 \caption{A graphical model representation and orbit structure of the pigeonhole
   example with 3 pigeons and 2 holes.
  %  The clauses
  % from Equation~\ref{eq:holefactor} are solid and the clauses from
  % Equation~\ref{eq:pigeonfactor} are dashed.
}
  \label{fig:pigeon}
\end{figure*}

Probabilistic inference is fundamentally computationally hard in
the worst case~\citep{Roth1996}. Thus, designers of probabilistic inference
algorithms focus on identifying and exploiting sufficient conditions of the
distribution that ensure tractable inference. For instance, many existing
probabilistic inference strategies for graphical models exploit independence in
order to scale efficiently \citep{Koller2009, Darwiche2009}. The performance of
these algorithms is worst-case exponential in a graph metric known as the
\emph{treewidth} that quantifies the degree of independence in the graph.
% Thus, many existing exact inference methods seek to scale exponentially in the treewidth.

\emph{Lifted inference} algorithms identify \emph{symmetry} as a key property
that enables efficient inference~\citep{Poole2003, Kersting2012, NiepertAAAI14}.
These methods identify \emph{orbits} of the distribution:
sets of points in the probability space that are guaranteed to have the same
probability. This enables inference strategies that scale in the number of
distinct
orbits. Highly symmetric distributions have few orbits relative to the size of
their state space, allowing lifted inference algorithms to scale to large
probability distributions with scant independence. Thus, lifted inference
algorithms identify symmetry as a complement to independence in the search for
efficient inference algorithms.

An important challenge in designing lifted inference algorithms is identifying
symmetries of a probability distribution from its high-level description. Existing exact lifted inference
algorithms rely on relational structure to extract symmetries, and thus cannot
be directly applied to propositional probability models like factor
graphs~\citep{Getoor2007}. Several approximate lifted inference algorithms ease this requirement by
extracting symmetries of the probability distribution by computing an automorphism
group of a graph, and can thus be applied directly to factor
graphs~\citep{Kersting2009, Niepert2012, Niepert2013, Bui2013}. However,
existing lifted MCMC algorithms are not guaranteed to mix
rapidly in the number of orbits.

This paper presents exact and approximate lifted inference algorithms for
arbitrary factor graphs that provably scale with the number of orbits of the
probability distribution. Inspired by the success of existing approximate lifted
inference techniques on graphical models, we apply graph isomorphism tools to
extract the necessary symmetries. First, we present a motivating example that highlights the
strengths and weaknesses of our approach. Then, we describe our exact inference
procedure. Computationally, our method combines efficient group theory
libraries like $\mathtt{GAP}$~\citep{GAP4} with graph isomorphism tools.
% To do this we (1) find a single
% \emph{canonical assignment} of each orbit, and (2) compute the size of the orbit
% of each canonical assignment.
% Our algorithm operates on a colored
% factor graph in the style of \citet{Niepert2012} and \citet{Bui2013}.
% variable assignments directly into the colored factor graph. This
% \emph{assignment encoding} allows us to use graph isomorphism tools to compute
% two key quantities for a given assignment. First, we can compute its canonical
% form: a process called \emph{assignment canonization}. Second, we can compute
% the \emph{assignment stabilization}: the set of
% permutations of variables that leave particular assignments unchanged.
% We give a simple breadth-first search algorithm called \emph{orbit
% generation} for finding all possible canonical assignments and computing the
% size of all orbits. Our key insight is to encode variable assignments directly
% into the factor graph; this lets us rely on graph isomorphism tools for
% accomplishing tasks (1) and (2) above \citep{Mckay2014}.
% Then, we provide experimental evidence that shows that our exact
% inference algorithm scales efficiently on highly symmetric graphical models.
% To our knowledge
% this is the first example of exact lifted inference for general factor graphs.

Next, we describe an approximate inference algorithm called \emph{orbit-jump
MCMC} that provably mixes quickly in the number of orbits of the distribution.
Orbit-jump MCMC provides an alternative to lifted MCMC \citep{Niepert2012,
Niepert2013}, a family of approximate lifted inference algorithms that compute a
single graph automorphism in order to quickly transition \emph{within} orbits. A key advantage
of lifted MCMC is that transitions do not each require a
call to a graph isomorphism tool. However, lifted MCMC relies on Gibbs sampling
to jump \emph{between orbits}, and as a consequence has no guarantees about its
mixing time in terms of the number of orbits. We will show that orbit-jump MCMC
mixes rapidly in the number of distinct orbits, at the cost of
requiring multiple graph isomorphism calls for each transition.

Note, however, that purely scaling in the number of orbits is not a panacea. Our
methods are both limited: there are liftable probability models
that still have too many orbits for our methods to be
effective. The presented methods \emph{only} exploit
symmetry,
which is in contrast to existing exact lifted inference algorithms that
simultaneously exploit
symmetry and independence.
Therefore, our algorithms scale exponentially for certain well-known liftable
distributions, such as the friends and smokers Markov logic
network~\citep{NiepertAAAI14}. Thus, we view this work
as providing a foundation for future work on inference for factor graphs that
exploits both symmetry and independence.

% that mixes in time linear in the number of orbits. It relies on the
% ability to sample from the \emph{uniform orbit distribution}: the distribution
% defined by choosing an orbit uniformly at random, and then choosing an element
% within that orbit uniformly at random. We show how to draw samples from this
% distribution and then use it as a proposal that efficiently jumps between
% orbits. In comparison with existing lifted MCMC algorithms, this method
% provably mixes quickly at the cost of more expensive transitions
% This provides an orthogonal symmetry-aware approximation technique to
% the within-orbit sampling of lifted MCMC~\citep{Niepert2012}.

\section{MOTIVATION}
\label{sec:motivation}
% describe the algorithm. explain what a canonical choice of variables is, and
% what a variable symmetry is. this will help _so_ much

As a motivating example, consider performing exact lifted probabilistic
inference on a probabilistic version of the pigeonhole problem. The pigeonhole
problem is a well-studied problem from automated reasoning that exhibits nuanced
symmetry. While seemingly simple, the pigeonhole problem is in fact extremely
challenging to reason about, and is often used as a benchmark in automated
reasoning tasks~\citep{Benhamou1994, Sabharwal2006, Raz2004}. A \emph{weighted
set of clauses} is a set of pairs $\Delta = \{(w, f)\}$ where $f$ is a Boolean
clause and $w \in \mathbb{R}$ is a weight. A weighted set of clauses defines a
probability distribution over assignments $\varx$ of variables in $\Delta$
according to the following:
\begin{align*}
  \Pr(\varx) = \frac{1}{Z}\exp \left[  \sum_{\{(w, f)  \in \Delta ~\mid~ \varx \models f\}} w \right],
\end{align*}
where $Z$ is a normalizing constant, and $\varx \models f$ denotes that clause
$f$ is satisfied in world $\varx$. Our goal is to compute $Z$, a task that is
\#P-hard in general.

Consider a set of weighted clauses for a \emph{soft pigeonhole problem}. There
are $n$ pigeons and $m$ holes. Each pigeon can occupy at most one hole, and
pigeons prefer to be solitary. To encode this situation as a weighted set of
clauses, for each pigeon $i$ and hole $j$, let $x_{ij}$ be a Boolean variable
that is true if and only if pigeon $i$ occupies hole~$j$.

The set $\Delta$ is a union of two sets of weighted clauses. For each
pigeon, we introduce a clause that forces it to occupy at most a single
hole:
\begin{align}
  \left( \infty,~ \overline{x_{ik}} \lor \overline{x_{il}} \right) 
  ~ \text{for each pigeon $i$ and holes $k \ne l$}.
  \label{eq:pigeonfactor}
\end{align}
% \begin{align}
%   \left(0.5~,~~ \bigvee_j x_{ij}\right) \quad \text{for every pigeon } i
%   \label{eq:pigeonfactor}
    %   \end{align}
An infinite weight encodes a \emph{hard clause} that must hold in the
distribution \citep{Richardson2006}. Then, for each hole we introduce
clauses that assign a positive weight to not having multiple pigeons:
\begin{align}
  \left( 2,~ \overline{x_{kj}} \lor \overline{x_{lj}} \right) 
  ~ \text{for each hole $j$ and pigeons $k \ne l$}.
  \label{eq:holefactor}
\end{align}

Figure~\ref{fig:pigeongraph} depicts this probability distribution with 3
pigeons and 2 holes as a pairwise colored factor graph,
where each weighted clause is a factor (box) and each distinct factor is given its own color
\citep{Niepert2012, Bui2013}. 
The factors in Equation~\ref{eq:pigeonfactor} are colored red, and the factors
in Equation~\ref{eq:holefactor} are colored black.

The symmetries of this probability distribution directly correspond to
automorphisms of the colored graph in Figure~\ref{fig:pigeongraph}
\citep{Bui2013, Niepert2012}.
In this paper, we consider only symmetries on assignments that arise from
symmetries on the variables.
Any permutation of vertices that preserves the
graph structure leaves the distribution unchanged.\footnote{We assume here w.l.o.g.\ but for simplicity that the factors are individually fully
symmetric. Asymmetric factors can either be made symmetric by duplicating variable nodes
\citep{Niepert2012} or encoded using colored edges \citep{Bui2013}.} Two
assignments that are
reachable from one another via a sequence of permutations are in the same
\emph{orbit}; all assignments in the same orbit thus have the same probability.
Figure~\ref{fig:pigeonorbit} shows the orbits of the 3-pigeon 2-hole 
scenario up to inversion of true and false assignments. Each orbit is boxed.
There are few orbits relative to the number of states, which is the
property that our lifted inference algorithms exploit.

We present both exact and approximate inference strategies that scale with the
number of orbits of a probability distribution. Our exact inference algorithm is
as follows. First, generate a single \emph{canonical representative} from each
orbit; in Figure~\ref{fig:pigeonorbit}, canonical representatives are shown in
bold. Then for each representative, compute the size of its orbit. If both of these
steps are efficient, then this inference computation scales efficiently with the
number of orbits, and we call it lifted. This \emph{orbit generation} procedure
is at the heart of many existing lifted inference algorithms that construct
sufficient statistics of the distribution from a relational representation
\citep{NiepertAAAI14}. We present an exact lifted inference algorithm in Section~\ref{sec:genorb} that
applies this methodology to arbitrary factor graphs by using graph isomorphism
tools to generate canonical representatives and compute orbit sizes.

Next, in Section~\ref{sec:orbsamp} we describe an approximate inference algorithm called \emph{orbit-jump MCMC}
that provably mixes quickly in the number of distinct orbits of the
distribution. This algorithm uses as its proposal the \emph{uniform orbit
distribution}: the distribution defined by choosing an orbit of the distribution
uniformly at random, and then choosing an element within that orbit uniformly at
random. We present a novel application of the \emph{Burnside process} in order
to draw samples from the uniform orbit distribution~\citep{Jerrum1992}, and show
how to implement the Burnside process on factor graphs by using graph isomorphism tools. Thus,
this orbit-jump MCMC provides an alternative to lifted MCMC that trades
computation time for provably good sample quality.

\section{BACKGROUND}
This section gives a brief description of important concepts from group theory and
approximate lifted inference that will be used throughout the
paper.\footnote{See Appendix~\ref{sec:notation} for a summary of the notation.}

\subsection{GROUP THEORY}
We review some standard terminology and notation from group theory, following \citet{Artin1998}.
A \emph{group} $\group$ is a pair $(S, \cdot)$ where $S$ is a
set and $\cdot: S \times S \rightarrow S$ is a binary
associative function such that there is an identity element and every element in
$S$ has an inverse under $(\cdot)$. The \emph{order} of a group is the
number of elements of its underlying set, and is denoted $|\group|$. A
\emph{permutation group acting on a set $\Omega$} is a set of bijections $g :
\Omega \rightarrow \Omega$ that forms a group under function composition. For
$\group$ acting on $\Omega$, a function $f: \Omega \rightarrow \Omega'$ is
\emph{$\group$-invariant} if $f(g \cdot x) = f(x)$ for any $g \in \group, x \in
\Omega$.
Two elements $x, x' \in \Omega$ are in the same \emph{orbit} under
$\group$ if there exists $g \in \group$ such that $x = g \cdot x'$. Orbit
membership is an equivalence relation, written $x \sim_\group x'$. The
set of all elements in the same orbit is denoted $\orb_\group(x)$. A \emph{stabilizer}
of $x$ is an element $g \in \group$ such that $g \cdot x =
x$; the set of all stabilizers of $x$ is a group called the
\emph{stabilizer subgroup}, denoted $\stab_\group(x)$. The subscript
in the previous notation is elided when clear. A \emph{cycle} $(x_1
~ x_2 ~ \cdots ~ x_n)$ is a permutation $x_1 \mapsto x_2, x_2 \mapsto x_3,
\cdots, x_n \mapsto x_1$. A permutation can be written as a product of
disjoint cycles.

\subsection{LIFTED PROBABILISTIC INFERENCE \& GRAPH AUTOMORPHISMS}
Lifted inference relies on the ability to identify the symmetries of probability
distributions. In existing  exact lifted inference methods, the symmetries are evident from the
relational structure of the probability model \citep{Poole2003,braz2005lifted,gogate2011probabilistic,VdBThesis13}.
In order to extend the insights of lifted inference to models where the
symmetries are less accessible, many lifted approximation algorithms rely on
graph isomorphism tools to identify the symmetries of
probability distributions \citep{Niepert2012, Bui2013, Mckay2014}.

A \emph{colored graph} is a 3-tuple $\graph = (V,E,C)$ where $(V,E)$ are the
vertices and edges of an undirected graph and $C = \{V_i\}^k_{i=1}$ is a
partition of vertices into $k$ sets. As notation, for a vertex $v$, let $\col(v,C)
= i$ if $v \in V_i$. A colored graph automorphism is an edge and
color-preserving vertex automorphism:
\begin{definition}
  Let $\graph = (V, E, C)$ and $\graph' = (V, E', C')$ be colored graphs. Then
$\graph$ and $\graph'$ are \emph{color-automorphic} to one another, denoted
$\cong$, if there exists a bijection $\phi: V \rightarrow V$ such that
\begin{enumerate}[noitemsep]
\item Vertex neighborhoods are preserved, i.e. for any $v_1, v_2 \in V$, $(v_1,
v_2) \in E \Leftrightarrow (\phi(v_1), \phi(v_2)) \in E'$;
\item Vertex colors are preserved, i.e. for all $v \in V$, $\col(v,C) =
\col(\phi(v),C')$.
\end{enumerate}
\end{definition}

The \emph{color automorphism group} of a colored graph $\graph$, denoted
$\aut(\graph)$, is the group formed by the set of color automorphisms of
$\graph$ under composition. The group $\aut(\graph)$ acts on the vertices of
$\graph$ by permuting them. Tools like $\mathtt{Nauty}$ can compute
$\aut(\graph)$ and are typically efficient in
$|V|$~\citep{Mckay2014}.

Colored graph automorphism groups are related to factor graphs via the following:
\begin{definition}
  Let $\fgraph = (\varX, F)$ be a factor graph with variables $\varX$, and
factors $F$, where $F$ are symmetric functions on assignments to variables
$\varX$, written $\varx$. Then the \emph{colored graph induced by $\fgraph$} is a tuple
$(V, E, C)$ where $V = \varX \cup F$, the set of edges $E$ connects
variables and factors in $\fgraph$, and $C$ is a partition such that (1)
factor nodes are given the
same color iff they are identical factors, and (2)
variables are colored with a single color that is distinct from the factor colors.
\end{definition}
This definition is due to \citet{Bui2013}, where the following theorem is
proved (with different terminology):
\begin{theorem}[\citet{Bui2013}, Theorem 2]
  Let $\fgraph$ be a factor graph and $\graph$ be its induced colored graph.
Then, the distribution of $\fgraph$ is $\aut(\graph)$-invariant.
\end{theorem}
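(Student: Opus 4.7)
The plan is to unpack what it means for a color automorphism $\phi \in \aut(\graph)$ to preserve both the graph structure and the coloring, and then show that the induced action on variable assignments leaves the factor-graph weight $\prod_{f \in F} f(\varx|_{\mathrm{nbrs}(f)})$ unchanged. Since the variable color is distinct from every factor color (property 2 of the induced graph), any $\phi$ restricts to a permutation $\pi : \varX \to \varX$ on variables and a separate permutation $\sigma : F \to F$ on factor nodes. Moreover, because factor nodes share a color iff they are the same factor function, $\sigma(f)$ is always equal to $f$ as a function on its arguments.

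Next I would analyze how neighborhoods transform. Condition~1 of color-automorphism says $(v,f) \in E$ iff $(\phi(v), \phi(f)) \in E$, i.e.\ $v \in \mathrm{nbrs}(f)$ iff $\pi(v) \in \mathrm{nbrs}(\sigma(f))$. Hence $\mathrm{nbrs}(\sigma(f)) = \pi(\mathrm{nbrs}(f))$. Define the action of $\pi$ on an assignment $\varx$ by $(\pi \cdot \varx)_v = \varx_{\pi^{-1}(v)}$, so that for every variable $v$, the value of the permuted assignment at $\pi(v)$ equals $\varx_v$. I then want to show $\Pr(\pi \cdot \varx) = \Pr(\varx)$.

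The computation is as follows. Reindex the product over factors using the bijection $\sigma$:
\begin{align*}
  \prod_{f' \in F} f'\bigl((\pi \cdot \varx)|_{\mathrm{nbrs}(f')}\bigr)
  \;=\; \prod_{f \in F} \sigma(f)\bigl((\pi \cdot \varx)|_{\mathrm{nbrs}(\sigma(f))}\bigr)
  \;=\; \prod_{f \in F} f\bigl((\pi \cdot \varx)|_{\pi(\mathrm{nbrs}(f))}\bigr).
\end{align*}
For each $v \in \mathrm{nbrs}(f)$, the restriction $(\pi \cdot \varx)|_{\pi(\mathrm{nbrs}(f))}$ assigns to $\pi(v)$ the value $\varx_v$. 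So as a multiset of values, $(\pi \cdot \varx)|_{\pi(\mathrm{nbrs}(f))}$ coincides with $\varx|_{\mathrm{nbrs}(f)}$ up to a relabeling of the argument positions. By the assumption that every factor is symmetric in its arguments, $f$ is invariant under this relabeling, yielding $f((\pi \cdot \varx)|_{\pi(\mathrm{nbrs}(f))}) = f(\varx|_{\mathrm{nbrs}(f)})$. Taking the product gives $\Pr(\pi \cdot \varx) = \Pr(\varx)$, and since $Z$ is a constant the normalized distribution is $\aut(\graph)$-invariant.

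The main subtlety is the last step: one has to be careful that $\pi$ acts on assignments while $f$ takes values indexed by (abstract) argument slots, and to invoke factor symmetry exactly at the point where one passes from ``the multiset of values of $\pi \cdot \varx$ on $\pi(\mathrm{nbrs}(f))$'' to ``the multiset of values of $\varx$ on $\mathrm{nbrs}(f)$.'' Without symmetry one would need the argument order built into the edge structure (e.g.\ via colored edges, as discussed in the footnote); the proof then goes through by an analogous but more bookkeeping-heavy argument using position-labeled edges.
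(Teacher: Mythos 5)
Your argument is correct, but note that the paper itself offers no proof of this statement: it is imported verbatim from \citet{Bui2013} (their Theorem~2), so there is no in-paper derivation to compare against. Your proposal is a valid self-contained proof under the paper's stated assumptions. The two load-bearing facts you use are exactly the ones the paper's definitions supply: (i) variables and factors receive disjoint colors and factor nodes share a color iff they are \emph{identical} factors, so any $\phi \in \aut(\graph)$ decomposes into a variable permutation $\pi$ and a factor permutation $\sigma$ with $\sigma(f) = f$ as a function; and (ii) edge preservation gives $\mathrm{nbrs}(\sigma(f)) = \pi(\mathrm{nbrs}(f))$, so reindexing the product over $F$ by $\sigma$ and invoking the symmetry of each factor (which the paper assumes w.l.o.g.\ in its definition of the induced colored graph and in the footnote) yields $\Pr(\pi \cdot \varx) = \Pr(\varx)$. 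You are right to flag the argument-slot bookkeeping as the only delicate point --- without fully symmetric factors the plain vertex coloring does not carry enough information and one must encode argument positions via colored edges or duplicated variable nodes, exactly as the paper's footnote indicates --- so your proof correctly identifies where the symmetry hypothesis is actually consumed.
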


% Approximate lifted inference techniques formalize the relationship between
% automorphisms of graphs and $\group$-invariance of probability distributions
% \citep{Niepert2012,Bui2013}.

% \begin{itemize}
% \item Approximate lifted inference: orbital MCMC \citep{Niepert2012}, lifted
%   belief propagation \citep{Singla2008}
% \item Exact lifted inference: first-order \citep{Poole2003}
% \end{itemize}
% \subsection{Lifted Inference \& Symmetry Exploitation}

\section{EXACT LIFTED INFERENCE}
\label{sec:genorb}
In this section we describe our exact lifted inference procedure. First we
discuss the group-theoretic properties of orbit generation that enable
efficient exact lifted inference. Then, we describe our algorithm for
implementing orbit generation on colored factor graphs. Finally, we present some
case studies demonstrating the performance of our algorithm.

\subsection{$\group$-INVARIANCE AND TRACTABILITY}
% \begin{definition}[$\group$-invariant function]
%   Let $f: \calD \rightarrow R$ be a function, and let $\group$ be a group that
% acts on $\calD$. Then, $f$ is called \emph{$\group$-invariant} if for any $g \in
% \group$ and $x \in \calD$, we have that $f(x) = f(g \cdot x)$.
% \end{definition}
In this section we describe the group-theoretic underpinnings of our
orbit-generation procedure and describe its relationship with previous work on
tractability through exchangeability.
We will capture the behavior of a $\group$-invariant probability distribution on
a set of \emph{canonical representatives} of each orbit:
\begin{definition}[]
  Let $\group$ be a group that acts on a set $\Omega$. Then, there exists a
set of \emph{canonical
representatives} set $\Omega / \group \subseteq \Omega$ and surjective
\emph{canonization function} $\sigma : \Omega \rightarrow
\Omega / \group$ such that for any $x, y \in \Omega$, (1) $\orb(x) =
\orb(\sigma(x))$; and (2) $\orb(x) = \orb(y)$ if and
only if $\sigma(x) = \sigma(y)$.
\end{definition}

In statistics, $\sigma$ is often called a \emph{sufficient statistic} of a
partially exchangeable distribution \citep{NiepertAAAI14, Diaconis1980}.
The motivating example hinted at a general-purpose solution for exact inference
that proceeds in two phases. First, one constructs a representative of each
orbit; then, one efficiently computes the size of that orbit. We can formalize
this using group theory:
\begin{theorem}
  Let $\Pr$ be a $\group$-invariant distribution on $\Omega$, and evidence
$\evidence : \Omega \rightarrow \bool$ be a $\group$-invariant function. Then,
the complexity of computing the most probable explanation (MPE) is $\poly(|\Omega /
\group|)$ if the following can be computed in $\poly(|\Omega /
\group|)$:
  \begin{enumerate}[noitemsep]
  \item Evaluate $\Pr(x)$ for $x \in \Omega$;
  \item (Canonical generation) Generate a set of canonical representatives
    $\Omega / \group$,
  \end{enumerate}
 Moreover, if $|\orb(x)|$ can be computed in $\poly(|\Omega /
 \group|)$, then $\Pr(\evidence)$ can be computed in 
$\poly(|\Omega / \group|)$.
  \label{thm:tractability}
\end{theorem}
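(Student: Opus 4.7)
The plan is to exploit the hypothesis that both $\Pr$ and $\evidence$ are $\group$-invariant, which forces them, and their product, to be constant on each $\group$-orbit of $\Omega$. This reduces every inference task to an enumeration over the canonical set $\Omega/\group$.

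For the MPE computation, the objective $x \mapsto \Pr(x)\cdot\evidence(x)$ is the product of two $\group$-invariant functions and is therefore $\group$-invariant itself. Consequently an orbit-wise maximizer is a global maximizer, so it suffices to iterate over all canonical representatives in $\Omega/\group$, evaluate the objective at each using assumption (1), and return the best (lifting it back to $\Omega$ yields any element of its orbit). By assumption (2) the enumeration itself fits in $\poly(|\Omega/\group|)$ time, and the inner evaluations cost $\poly(|\Omega/\group|)$ each, for a total running time of $\poly(|\Omega/\group|)$. Notice that this first half of the theorem does not invoke $|\orb(x)|$ at all, since MPE only compares probabilities within orbits rather than aggregating across them.

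For computing $\Pr(\evidence)$, I would partition $\Omega$ into orbits and write
\[
  \Pr(\evidence) \;=\; \sum_{x \in \Omega} \Pr(x)\,\evidence(x) \;=\; \sum_{x \in \Omega/\group} |\orb(x)|\,\Pr(x)\,\evidence(x),
\]
where the second equality holds precisely because $\Pr$ and $\evidence$ are both orbit-constants. Each term requires one evaluation of $\Pr$, one evaluation of $\evidence$, and one call to the newly assumed orbit-size oracle $|\orb(x)|$; summing over the $|\Omega/\group|$ representatives therefore runs in $\poly(|\Omega/\group|)$.

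The main obstacle is not algebraic — the orbit decomposition and G-invariance argument is essentially a one-line use of Lagrange-style bookkeeping. The real difficulty, and presumably what the remaining sections will address, lies in the preconditions themselves: constructing the canonization $\sigma$ and enumerating $\Omega/\group$ without materialising $\Omega$, and computing the orbit sizes $|\orb(x)|$ from a succinct description of $\group$. The theorem's role is therefore mainly to isolate these two capabilities as the right sufficient conditions; once they are granted, the algorithms are immediate.
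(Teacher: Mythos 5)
Your proof is correct and follows essentially the same route as the paper's: enumerate the canonical representatives, use $\group$-invariance of $\Pr$ and $\evidence$ to evaluate only on representatives (an $\argmax$ over $\{x \in \Omega/\group \mid \evidence(x) = \true\}$ for MPE, and the orbit-size-weighted sum $\sum |\orb(x)| \Pr(x)$ for $\Pr(\evidence)$). Your added remark that the MPE case never needs $|\orb(x)|$ correctly explains why that assumption appears only in the ``moreover'' clause.
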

\begin{proof}
  To compute the MPE, choose:
  \begin{align}
    \argmax_{\{x \in \Omega / \group ~\mid~ \evidence(x) = \true\}} \Pr(x).
  \end{align}
  The $\group$-invariance of $\evidence$ allows us to evaluate $\evidence$ on only
  $x$ without considering other elements of $\orb(x)$.
  To compute $\Pr(\evidence)$, compute
  \begin{align}
    \sum_{\{x \in \Omega / \group ~\mid~ \evidence(x) = \true\}} |\orb(x)| \times \Pr(x).
  \end{align}
  Both of these can be accomplished in $\poly(|\Omega / \group|)$.
\end{proof}
% The relationship between symmetry and tractability of inference was studied by
% \citet{Niepert2012}.
\citet{NiepertAAAI14} identified a connection between bounded-width
\emph{exchangeable decompositions} and tractable (i.e., domain-lifted) exact
probabilistic inference using the above approach. Exchangeable decompositions are a particular kind of
$\group$-invariance. Let $\Pr(\varX_1, \varX_2, \cdots, \varX_n)$ be a
distribution on sets of variables $\varX_i$. Let $S_n$ be a group of all
permutations on a set of size $n$. Then, this distribution has an
exchangeable decomposition along $\{\varX_i\}$ if, for any $g \in S_n$:
\begin{align*}
  \Pr(\varX_1, \varX_2, \cdots, \varX_n) = \Pr(\varX_{g \cdot 1}, \varX_{g \cdot 2}, \cdots, \varX_{g \cdot n})
\end{align*}
\citet{NiepertAAAI14} showed how to perform exact lifted probabilistic inference
on any distribution with a fixed-width exchangeable decomposition by directly
constructing canonical representatives. However, this construction does not
generalize to other kinds of symmetries, and thus cannot be applied to factor
graphs which may have arbitrarily complex symmetric structure. In the next
section, we show how to apply Theorem~\ref{thm:tractability} to factor graphs.

\subsection{ORBIT GENERATION}
The previous section shows that inference can be efficient if we can (1)
construct representatives of each orbit class, (2) compute how large each orbit
is. In this section, we give an algorithm for performing these two operations
for colored factor graphs. First, we describe how to encode variable assignments
directly into the colored factor graph,
allowing us to leverage graph isomorphism
tools to compute canonical representatives and orbit sizes for
assignments to variables in factor graphs.
This colored assignment encoding is our
key technical contribution, and forms a foundation for our exact and approximate
inference algorithms. Then, we will
give a breadth-first search procedure for generating all canonical
representatives of a colored factor graph.

\subsubsection{Encoding Assignments}
\label{sec:encoding}
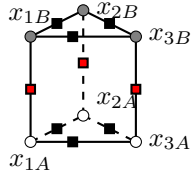
\begin{figure}
  \centering
   \begin{tikzpicture}[factor node/.style={rectangle,draw,thick,solid, scale=0.5},scale=0.7]

   \node[factor node, fill=red] at (0, 1.5) {};
   \node[factor node, fill=red] at (-1, 1) {};
   \node[factor node, fill=red] at (1, 1) {};

   \node[factor node, fill=black] at (-0.5, 0.25) {};
   \node[factor node, fill=black] at (0.5, 0.25) {};
   \node[factor node, fill=black] at (-0.2, 0) {};

   \node[factor node, fill=black] at (-0.5, 2.25) {};
   \node[factor node, fill=black] at (0.5, 2.25) {};
   \node[factor node, fill=black] at (-0.2, 2) {};

   \begin{scope}[on background layer]
     \draw[dashed, thick] (0, 0.5) -- (0, 2.5);
     \draw[thick] (1, 0) -- (1, 2);
     \draw[thick] (-1, 0) -- (-1, 2);

     \draw[dashed, thick] (-1,0) -- (0,0.5);
     \draw[dashed, thick] (0,0.5) -- (1,0);
     \draw[thick] (-1, 0) -- (1, 0);

     \draw[thick] (-1,2) -- (0,2.5);
     \draw[thick] (0,2.5) -- (1,2);
     \draw[thick] (-1, 2) -- (1, 2);
    
    \draw[fill=white] (-1, 0) circle (3pt) node[below=0.5mm] {$x_{1A}$};
    \draw[fill=white] (0, 0.5) circle (3pt)  node[right=0.5mm,yshift=0.2cm] {$x_{2A}$};
    \draw[fill=gray] (1, 2) circle (3pt) node[right=0.5mm] {$x_{3B}$};
    \draw[fill=white] (1, 0) circle (3pt) node[right=0.5mm] {$x_{3A}$};
    \draw[fill=gray] (0, 2.5) circle (3pt) node[right=0.5mm] {$x_{2B}$};
    \draw[fill=gray] (-1, 2) circle (3pt) node[above=0.5mm] {$x_{1B}$};
   \end{scope}
\end{tikzpicture}
\caption{A colored graph of the 3-pigeon 2-hole problem that encodes the
  assignment $\varx=000~111$. True variable nodes are gray and false variable
  nodes are white.}
\label{fig:assgnencoding}
\end{figure}
Our objective in this section is to leverage graph isomorphism tools to compute
the key quantities necessary for applying the procedure described in
Theorem~\ref{thm:tractability} to factor graphs.
Let $\graph$ be the induced colored graph of
$\fgraph$.
As terminology, an element $\varx \in \bool^\varX$ is an
assignment to variables $\varX$.
We will use graph isomorphism tools to construct (1) a canonization
function for variable assignments, $\sigma : \bool^\varX \rightarrow \bool^\varX / \aut(\graph)$; and (2)
the size of the orbit of $\varx \in \bool^\varX$ under $\aut(\graph)$.
% Note that
% we are identifying orbits of assignments $\varx \in \bool^\varX$ by identifying
% the group acting on variables $\varX$. The group acting on $\varX$ is naturally
% extended to a group acting on $\bool^\varX$.
To do this, we encode assignments directly into the colored
factor graph, which to our knowledge is a novel construction in
this context:

\begin{definition}
  Let $\fgraph = (\varX, F)$ be a factor graph, let $\varx \in \bool^\varX$,
and let $\graph = (V,E,C)$ be the colored graph induced by $\fgraph$. Then the
\emph{assignment-encoded colored graph}, denoted $\graph(\fgraph, \varx)$, is
the colored graph that colors the variable nodes that are true and false in
$\varx$ with distinct colors in $\graph$.
\end{definition}

An example is shown in Figure~\ref{fig:assgnencoding}, which shows an encoding of
the assignment $000~111$. The assignment
$000~111$ is isomorphic to the assignment $111~000$ under the action of
$\aut(\graph)$, specifically flipping holes. Then, assignments that are in the same
orbit under $\aut(\graph)$ have isomorphic colored graph encodings:
\begin{theorem}
  Let $\fgraph = (\varX, F)$ be a factor graph, $\graph$ be its colored graph
encoding, and $\varx, \varx' \in \bool^\varX$. Then, $\varx \sim \varx'$
under the action of $\aut(\graph)$ iff $\graph(\fgraph, \varx) \cong
\graph(\fgraph, \varx')$.
\label{thm:coloredcorrespondence}
\vspace{-0.8cm}
\end{theorem}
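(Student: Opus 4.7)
The statement is a biconditional, so my plan is to prove each direction separately, exploiting the fact that $\graph(\fgraph,\varx)$ is obtained from $\graph$ by \emph{refining} the single variable color class into two subclasses (true-assigned and false-assigned variable nodes), while leaving the edge set and factor coloring untouched. Thus any color-automorphism of an assignment-encoded graph must, in particular, be a color-automorphism of $\graph$, and vice versa subject to a constraint on where true/false variables get sent.

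For the forward direction, suppose $\varx \sim \varx'$ under $\aut(\graph)$. Then there exists $g \in \aut(\graph)$ with $g \cdot \varx = \varx'$. I would verify that this same $g$, viewed as a permutation of $V = \varX \cup F$, is an isomorphism $\graph(\fgraph,\varx) \to \graph(\fgraph,\varx')$: it already preserves edges and factor colors (since $g \in \aut(\graph)$), and the action $g \cdot \varx = \varx'$ means that a variable $v$ is true in $\varx$ iff $g(v)$ is true in $\varx'$, so $g$ also preserves the refined true/false coloring. Hence $\graph(\fgraph,\varx) \cong \graph(\fgraph,\varx')$.

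For the backward direction, suppose $\phi : \graph(\fgraph,\varx) \to \graph(\fgraph,\varx')$ is a color-preserving bijection. Because the assignment encoding only refines the variable color class into two subclasses of the same original color (and leaves factor colors alone), $\phi$ is a fortiori color-preserving with respect to the coarser coloring of $\graph$, so $\phi \in \aut(\graph)$. Moreover, since $\phi$ sends the ``true'' color class of $\graph(\fgraph,\varx)$ onto the ``true'' color class of $\graph(\fgraph,\varx')$, the action of $\phi$ on $\bool^\varX$ satisfies $\phi \cdot \varx = \varx'$, which gives $\varx \sim \varx'$.

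I expect neither direction to be genuinely difficult; the main obstacle is just being careful about the bookkeeping between ``$g$ acts on variable nodes of $\graph$'' and ``$g$ acts on assignments in $\bool^\varX$'' — specifically that the action on assignments is defined so that a variable $v$ has value $\varx(v)$ iff $g(v)$ has value $(g \cdot \varx)(g(v))$. Once that correspondence is made explicit, both directions reduce to the observation that a color-automorphism of the refined-coloring graph is precisely a color-automorphism of $\graph$ that additionally respects the true/false partition of variables induced by the assignment.
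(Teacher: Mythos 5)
Your proof is correct and follows essentially the same route as the paper's: apply the witnessing $g \in \aut(\graph)$ directly as an isomorphism of the assignment-encoded graphs in one direction, and observe in the other that an isomorphism of the refined colorings is a fortiori an element of $\aut(\graph)$ that maps $\varx$ to $\varx'$. Your backward direction is in fact slightly more explicit than the paper's, which simply asserts that the isomorphism lies in $\aut(\graph)$ without spelling out the color-refinement argument.
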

\begin{proof}
  See Appendix~\ref{sec:coloredcorrespondenceproof}.
\end{proof}
\paragraph{Canonization} Our goal now is to use graph
isomorphism tools to construct a canonization function for variable assignments.
In particular, it maps all isomorphic assignments to exactly one member of their
orbit. We will rely on \emph{colored graph canonization}, a well-studied problem
in graph theory for which there exist many implementations \citep{Mckay2014}:

\begin{definition}
  Let $\graph = (V,E,C)$ be a colored graph. Then a \emph{colored graph
canonization} is a canonization function $\sigma: V \rightarrow V /
\aut(\graph)$.
\end{definition}
A colored graph canonization function applied to Figure~\ref{fig:assgnencoding} will
select exactly one color-isomorphic vertex configuration as the canonical one,
for example putting all pigeons in hole $A$.
Then, the canonization of the assignment-encoded colored graph is a canonization
of variable assignments:
\begin{definition}
  Let $\fgraph = (\varX, F)$ and $\varx = \{(x, v)\}$ be a variable assignment,
where $x \in \varX$ and $v \in \bool$. Let $\sigma_{\graph(\fgraph, \varx)}$ be
a canonization of $\graph(\fgraph, \varx)$. Then, let $\sigma': \bool^\varX
\rightarrow \bool^\varX$ be defined $\sigma'(\varx) = \{
(\sigma_{\graph(\fgraph, \varx)}(x), v) \mid (x,v) \in \varx \}$. Then
$\sigma'$ is called the \emph{induced variable canonization} of $\bool^\varX$.
\end{definition}
Intuitively, an induced variable canonization computes the canonization
of the assignment-encoded colored graph, and then applies that canonization
function to variables. Then,
\begin{proposition}
For a factor graph $\fgraph$ with colored graph $\graph$, the induced variable
canonization is a canonization function $\bool^\varX \rightarrow \bool^\varX /
\aut(\graph)$.
\end{proposition}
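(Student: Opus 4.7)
The plan is to verify the two defining properties of a canonization function for the action of $\aut(\graph)$ on $\bool^\varX$: that $\sigma'(\varx)$ always lies in $\orb_{\aut(\graph)}(\varx)$, and that $\sigma'(\varx) = \sigma'(\varx')$ exactly when $\varx \sim \varx'$. The two tools we need are already in place. Theorem~\ref{thm:coloredcorrespondence} supplies the equivalence $\varx \sim \varx' \Leftrightarrow \graph(\fgraph, \varx) \cong \graph(\fgraph, \varx')$, and the colored graph canonization $\sigma_{\graph(\fgraph, \varx)}$ is guaranteed, by definition, to send isomorphic colored graphs to the same canonical vertex labeling.

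For property (1), I would argue that the vertex permutation $\pi_\varx$ implicit in $\sigma_{\graph(\fgraph, \varx)}$, i.e.\ the one carrying $\graph(\fgraph, \varx)$ to its canonical form, is in fact an element of $\aut(\graph)$. The key observation is that the coloring of $\graph(\fgraph, \varx)$ is a refinement of the coloring of $\graph$: it splits the single variable color into true- and false-colored classes while leaving every factor color alone. A permutation that preserves the refined coloring must also preserve the coarser coloring on $\graph$, and since it additionally preserves the edge set $E$ (shared by both graphs), we conclude $\pi_\varx \in \aut(\graph)$. Thus $\sigma'(\varx) = \pi_\varx \cdot \varx$ lies in $\orb_{\aut(\graph)}(\varx)$, which gives property (1) and shows in particular that the image of $\sigma'$ sits inside $\bool^\varX$.

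Property (2) then follows in two short steps. If $\varx \sim \varx'$, Theorem~\ref{thm:coloredcorrespondence} gives $\graph(\fgraph, \varx) \cong \graph(\fgraph, \varx')$, so the colored graph canonization produces a common canonical form, and the induced variable canonization agrees on the two assignments, yielding $\sigma'(\varx) = \sigma'(\varx')$. Conversely, if $\sigma'(\varx) = \sigma'(\varx')$, then property (1) places both $\varx$ and $\varx'$ in the orbit of this common value, so $\varx \sim \varx'$. Surjectivity onto the image is automatic, and this image then picks out exactly one representative per orbit, so we may take $\bool^\varX / \aut(\graph)$ to be $\sigma'(\bool^\varX)$.

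The only genuinely delicate step is the argument that $\pi_\varx \in \aut(\graph)$: the canonization is computed with respect to the finer coloring of $\graph(\fgraph, \varx)$, yet we need its orbit claim to hold for the coarser symmetry group $\aut(\graph)$ of the unencoded graph. Once the refinement observation is made explicit, the rest reduces to routine bookkeeping on top of Theorem~\ref{thm:coloredcorrespondence} and the defining properties of a colored graph canonization.
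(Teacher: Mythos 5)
The paper states this proposition without proof, so there is no official argument to compare against; your architecture --- reduce property (2) to Theorem~\ref{thm:coloredcorrespondence} plus the defining property of colored graph canonization, and reduce property (1) to showing that the canonizing permutation lies in $\aut(\graph)$ --- is surely the intended route, and you correctly identify the second of these as the delicate point. However, your justification of that point has a genuine gap. You assert that the permutation $\pi_\varx$ carrying $\graph(\fgraph,\varx)$ to its canonical form ``preserves the edge set $E$ (shared by both graphs),'' but that is not something a colored graph canonization guarantees: the canonical form is only required to be \emph{isomorphic} to the input, so its edge set is $\pi_\varx(E)$, which need not equal $E$, and its true/false color classes are the $\pi_\varx$-images of the originals rather than fixed setwise. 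Your refinement observation really proves a different (and correct) statement, namely that any isomorphism between two assignment encodings \emph{of the same underlying graph} $\graph$ preserves the coarse coloring and the shared edge set $E$ and hence lies in $\aut(\graph)$ --- this is exactly the backward direction of Theorem~\ref{thm:coloredcorrespondence}. What it does not show is that the canonical form is itself an assignment encoding of $\graph$. If it is not, then $\sigma'(\varx)$, which reads the true-set of the canonical form back as an assignment on $\graph$, can land outside $\orb(\varx)$ (picture a $4$-cycle of variables: a legal canonical labeling could send the encoding of ``two adjacent variables true'' to a relabeled $4$-cycle in which the two true-colored vertices are the original graph's opposite pair), and even property (2) can fail, since distinct canonical forms may happen to have identical true-sets.

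The repair is to make the canonization \emph{relative to the fixed graph} $\graph$ rather than to the isomorphism class of colored graphs. For instance: if $\varx \sim \varx'$ have canonical labelings $\pi_\varx, \pi_{\varx'}$, then the composite $\pi_{\varx'}^{-1}\pi_\varx$ \emph{is} an isomorphism between the two encodings and hence, by your refinement argument, an element of $\aut(\graph)$ carrying $\varx$ to $\varx'$; so fixing one reference preimage assignment per canonical form (as Algorithm~\ref{alg:liftedinf} implicitly does by retaining the first-discovered element of each orbit and using the canonical form only as a lookup key) yields a genuine canonization function $\bool^\varX \rightarrow \bool^\varX / \aut(\graph)$. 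With that adjustment your proof goes through; without it, the step you yourself flagged as delicate is not merely delicate but unestablished.
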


\paragraph{Computing the size of an orbit} 
Theorem~\ref{thm:tractability} requires efficiently computing
the size of the orbit of an assignment. To accomplish this,
we will apply the
orbit-stabilizer theorem in a manner similar to \citet{Niepert2013}.
% \footnote{ In
% contrast to \citet{Niepert2013}, we use graph isomorphism tools to compute the
% stabilizer instead of computer algebra tools such as $\mathtt{GAP}$. Thus we
% benefit from the performance of these specialized graph isomorphism tools.}
The size of a stabilizer is related to the size of an orbit with the following
well-known theorem:
\begin{theorem}[Orbit-stabilizer]
  Let $\group$ act on $\Omega$. Then for any $x \in \Omega$, $|\group| =
  |\stab(x)| \times |\orb(x)|$.
\end{theorem}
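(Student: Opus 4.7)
The plan is to prove this classical result by exhibiting an explicit bijection between the left cosets of $\stab(x)$ in $\group$ and the elements of $\orb(x)$, then combining that bijection with Lagrange's theorem. Concretely, define $\phi : \group / \stab(x) \rightarrow \orb(x)$ by $\phi(g \cdot \stab(x)) = g \cdot x$, where $\group / \stab(x)$ denotes the set of left cosets.

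First I would verify that $\phi$ is well-defined. If $g \cdot \stab(x) = h \cdot \stab(x)$, then $\inv{h} g \in \stab(x)$, so $(\inv{h} g) \cdot x = x$, and acting by $h$ on both sides gives $g \cdot x = h \cdot x$. The same chain of equivalences, read in reverse, shows injectivity: if $g \cdot x = h \cdot x$ then $\inv{h}g$ stabilizes $x$ and the two cosets agree. Surjectivity is immediate from the definition of the orbit, since every element of $\orb(x)$ has the form $g \cdot x$ for some $g \in \group$, which is the image of the coset $g \cdot \stab(x)$.

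Having established that $\phi$ is a bijection, I would conclude by invoking Lagrange's theorem, which gives $|\group| = [\group : \stab(x)] \cdot |\stab(x)|$, where the index $[\group : \stab(x)]$ equals $|\group / \stab(x)|$. The bijection $\phi$ yields $|\group / \stab(x)| = |\orb(x)|$, so substituting yields the desired identity $|\group| = |\stab(x)| \times |\orb(x)|$.

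There is no real obstacle here; the only subtlety is being careful that the coset-to-orbit map is well-defined before using it, and the proof is entirely standard. I would simply need to note that the argument relies on Lagrange's theorem (applicable because $\stab(x)$ is a subgroup of $\group$, as is easy to check from closure of stabilizers under composition and inversion), which we are free to cite as a foundational result from the group theory background.
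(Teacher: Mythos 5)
Your proof is correct and is the standard coset argument: the map $g \cdot \stab(x) \mapsto g \cdot x$ is a well-defined bijection from left cosets to the orbit, and Lagrange's theorem finishes the count. The paper itself states this as a well-known result and offers no proof, so there is nothing to compare against; your argument is exactly the canonical one (with the implicit, harmless assumption that $\group$ is finite, which holds in the paper's setting of automorphism groups of finite colored graphs).
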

Thus, to compute orbit size of assignments $\varx$, we will compute (1) the
order of the $\stab(\varx)$ under $\aut(\graph)$; and (2) the order of
$\aut(\graph)$. Now we can again use graph isomorphism tools. The
stabilizer of $\varx$ corresponds with the automorphism group of the
colored graph encoding of $\varx$. To see this, observe that a permutation that
relabels pigeons but leaves holes fixed is a stabilizer of the assignment in
Figure~\ref{fig:assgnencoding}; this permutation is also a member of the
color-automorphism group of the graph. Formally:
\begin{theorem}
  Let $\fgraph = (\varX, F)$ be a factor graph with colored graph encoding
$\graph$. Then for any $\varx \in \bool^\varX$, $\stab_{\aut(\graph)}(\varx) =
\aut(\graph(\fgraph, \varx))$.
\label{thm:stabilizercorrespondence}
\end{theorem}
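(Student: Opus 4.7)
The plan is to prove the set equality by mutual inclusion, exploiting the fact that $\graph(\fgraph,\varx)$ differs from $\graph$ only in that the single variable color class of $\graph$ is refined into two subclasses, one for variables assigned true by $\varx$ and one for variables assigned false. Throughout I will identify a permutation $g \in \aut(\graph)$ with its action on variable nodes, so that $g \cdot \varx$ means the assignment defined by $(g\cdot\varx)(x) = \varx(g^{-1}(x))$.

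First I would prove the inclusion $\stab_{\aut(\graph)}(\varx) \subseteq \aut(\graph(\fgraph,\varx))$. Fix $g \in \stab_{\aut(\graph)}(\varx)$. Because $g \in \aut(\graph)$, it is already edge-preserving and preserves factor colors and the original variable color class; the only thing to check is that it also respects the refinement of the variable color class by truth value. The condition $g \cdot \varx = \varx$ says exactly that for every variable $x$, $\varx(x) = \varx(g^{-1}(x))$, so $g$ maps true-variables to true-variables and false-variables to false-variables. Hence $g$ is a color-automorphism of $\graph(\fgraph, \varx)$.

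Next I would prove the reverse inclusion $\aut(\graph(\fgraph,\varx)) \subseteq \stab_{\aut(\graph)}(\varx)$. Fix $g \in \aut(\graph(\fgraph,\varx))$. Since the coloring of $\graph(\fgraph,\varx)$ is a refinement of the coloring of $\graph$ (the two variable subcolors lie inside the one original variable color, and factor colors are unchanged), any color-automorphism of $\graph(\fgraph,\varx)$ is also a color-automorphism of $\graph$, giving $g \in \aut(\graph)$. Moreover, preserving the refined coloring means $g$ sends true-variables to true-variables and false-variables to false-variables, which is the statement $g \cdot \varx = \varx$, so $g \in \stab_{\aut(\graph)}(\varx)$.

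The main obstacle is mostly notational: being careful that the ``colored graph'' structure in $\graph(\fgraph,\varx)$ really is a refinement of the structure of $\graph$, so that membership in one automorphism group automatically gives the edge- and factor-color-preservation conditions needed for the other. Once this refinement observation is made explicit, both directions reduce to the tautology that preserving the partition of variables into true/false is the same as fixing the assignment. No group-theoretic machinery beyond the definitions of $\stab$ and $\aut$ is required.
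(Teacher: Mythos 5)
Your proof is correct and follows essentially the same route as the paper's: both directions reduce to observing that $\graph(\fgraph,\varx)$ refines the variable color class of $\graph$ by truth value, so that stabilizing $\varx$ is equivalent to preserving the refined coloring. If anything, your write-up is more careful than the paper's own (which loosely says $g$ ``fixes all variable nodes'' when it means $g$ preserves the true/false partition of variable nodes), so no changes are needed.
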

  The proof can be found in Appendix~\ref{sec:stabilizercorrespondence}.
Thus we have reduced computing orbit sizes to computing group orders, which can
be computed efficiently using computational group
theory tools such as $\mathtt{GAP}$ \citep{GAP4, Seress2003}.
Thus if we can
exhaustively generate canonical representatives, then we can perform lifted exact
inference. The next section shows how to do this.

% Theorem: canonization of variable assignment encoding is canonization in the
% distribution
% Theorem: stablization of variable assignment in encoding is stabilization in
% the distribution
\subsubsection{Generating All Canonical Representatives}
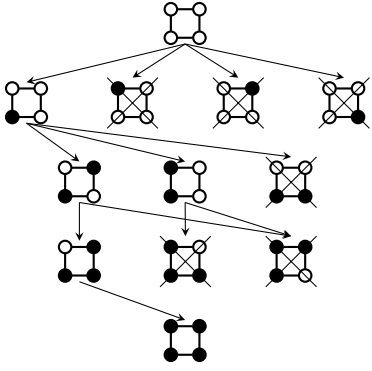
\begin{figure}
  \centering
  \begin{tikzpicture}[factor node/.style={circle,draw,thick,solid, scale=0.5},]
    
    % top level
    \begin{scope}[shift={(0em,0em)}, node distance=0.2cm, local bounding box=A0]
      \node[factor node] (A) {};
      \node[factor node, below=of A] (B) {};
      \node[factor node, right=of A] (C) {};
      \node[factor node, right=of B] (D) {};
      \draw[-, thick] (A) -- (B);
      \draw[-, thick] (A) -- (C);
      \draw[-, thick] (B) -- (D);
      \draw[-, thick] (C) -- (D);
    \end{scope}

    % second level, left to right
    \begin{scope}[shift={(-6em,-3em)}, node distance=0.2cm, local bounding box = B0]
      \node[factor node] (A) {};
      \node[factor node, below=of A, fill=black] (B) {};
      \node[factor node, right=of A ] (C) {};
      \node[factor node, right=of B] (D) {};
      \draw[-, thick] (A) -- (B);
      \draw[-, thick] (A) -- (C);
      \draw[-, thick] (B) -- (D);
      \draw[-, thick] (C) -- (D);
    \end{scope}

    \begin{scope}[shift={(-2em,-3em)}, node distance=0.2cm, local bounding box = B1]
      \draw (0.55em,-0.55em) node[cross=1em] {};
      \node[factor node, fill=black] (A) {};
      \node[factor node, below=of A] (B) {};
      \node[factor node, right=of A] (C) {};
      \node[factor node, right=of B] (D) {};
      \draw[-, thick] (A) -- (B);
      \draw[-, thick] (A) -- (C);
      \draw[-, thick] (B) -- (D);
      \draw[-, thick] (C) -- (D);
    \end{scope}

    \begin{scope}[shift={(2em,-3em)}, node distance=0.2cm, local bounding box = B2]
      \draw (0.55em,-0.55em) node[cross=1em] {};

      \node[factor node] (A) {};
      \node[factor node, below=of A] (B) {};
      \node[factor node, right=of A, fill=black] (C) {};
      \node[factor node, right=of B] (D) {};
      \draw[-, thick] (A) -- (B);
      \draw[-, thick] (A) -- (C);
      \draw[-, thick] (B) -- (D);
      \draw[-, thick] (C) -- (D);
    \end{scope}
   
    \begin{scope}[shift={(6em,-3em)}, node distance=0.2cm, local bounding box = B3]
      \draw (0.55em,-0.55em) node[cross=1em] {};
      \node[factor node] (A) {};
      \node[factor node, below=of A] (B) {};
      \node[factor node, right=of A] (C) {};
      \node[factor node, right=of B, fill=black] (D) {};
      \draw[-, thick] (A) -- (B);
      \draw[-, thick] (A) -- (C);
      \draw[-, thick] (B) -- (D);
      \draw[-, thick] (C) -- (D);
    \end{scope}

    % third level
    \begin{scope}[shift={(-4em,-6em)}, node distance=0.2cm, local bounding box = C0]
      \node[factor node] (A) {};
      \node[factor node, below=of A, fill=black] (B) {};
      \node[factor node, right=of A, fill=black ] (C) {};
      \node[factor node, right=of B] (D) {};
      \draw[-, thick] (A) -- (B);
      \draw[-, thick] (A) -- (C);
      \draw[-, thick] (B) -- (D);
      \draw[-, thick] (C) -- (D);
    \end{scope}

    \begin{scope}[shift={(0em,-6em)}, node distance=0.2cm, local bounding box = C1]
      \node[factor node, fill=black] (A) {};
      \node[factor node, below=of A, fill=black] (B) {};
      \node[factor node, right=of A] (C) {};
      \node[factor node, right=of B] (D) {};
      \draw[-, thick] (A) -- (B);
      \draw[-, thick] (A) -- (C);
      \draw[-, thick] (B) -- (D);
      \draw[-, thick] (C) -- (D);
    \end{scope}

    \begin{scope}[shift={(4em,-6em)}, node distance=0.2cm, local bounding box = C2]
      \draw (0.55em,-0.55em) node[cross=1em] {};

      \node[factor node] (A) {};
      \node[factor node, below=of A, fill=black] (B) {};
      \node[factor node, right=of A] (C) {};
      \node[factor node, right=of B, fill=black] (D) {};
      \draw[-, thick] (A) -- (B);
      \draw[-, thick] (A) -- (C);
      \draw[-, thick] (B) -- (D);
      \draw[-, thick] (C) -- (D);
    \end{scope}

    % fourth level
     \begin{scope}[shift={(-4em,-9em)}, node distance=0.2cm, local bounding box = D0]
      % \draw (0.55em,-0.55em) node[cross=1em] {};

      \node[factor node] (A) {};
      \node[factor node, below=of A, fill=black] (B) {};
      \node[factor node, right=of A, fill=black] (C) {};
      \node[factor node, right=of B, fill=black] (D) {};
      \draw[-, thick] (A) -- (B);
      \draw[-, thick] (A) -- (C);
      \draw[-, thick] (B) -- (D);
      \draw[-, thick] (C) -- (D);
    \end{scope}

    \begin{scope}[shift={(0em,-9em)}, node distance=0.2cm, local bounding box = D1]
      \draw (0.55em,-0.55em) node[cross=1em] {};
      \node[factor node, fill=black] (A) {};
      \node[factor node, below=of A, fill=black] (B) {};
      \node[factor node, right=of A] (C) {};
      \node[factor node, right=of B, fill=black] (D) {};
      \draw[-, thick] (A) -- (B);
      \draw[-, thick] (A) -- (C);
      \draw[-, thick] (B) -- (D);
      \draw[-, thick] (C) -- (D);
    \end{scope}

    \begin{scope}[shift={(4em,-9em)}, node distance=0.2cm, local bounding box = D2]
      \draw (0.55em,-0.55em) node[cross=1em] {};
      \node[factor node, fill=black] (A) {};
      \node[factor node, below=of A, fill=black] (B) {};
      \node[factor node, right=of A, fill=black] (C) {};
      \node[factor node, right=of B] (D) {};
      \draw[-, thick] (A) -- (B);
      \draw[-, thick] (A) -- (C);
      \draw[-, thick] (B) -- (D);
      \draw[-, thick] (C) -- (D);
    \end{scope}

    % fifth level
     \begin{scope}[shift={(0em,-12em)}, node distance=0.2cm, local bounding box = E0]
      \node[factor node, fill=black] (A) {};
      \node[factor node, below=of A, fill=black] (B) {};
      \node[factor node, right=of A, fill=black] (C) {};
      \node[factor node, right=of B, fill=black] (D) {};
      \draw[-, thick] (A) -- (B);
      \draw[-, thick] (A) -- (C);
      \draw[-, thick] (B) -- (D);
      \draw[-, thick] (C) -- (D);
    \end{scope}

    \draw[->,>=stealth] (A0.south) -- (B0.north);
    \draw[->,>=stealth] (A0.south) -- (B1.north);
    \draw[->,>=stealth] (A0.south) -- (B2.north);
    \draw[->,>=stealth] (A0.south) -- (B3.north);
    \draw[->,>=stealth] (B0.south) -- (C0.north);
    \draw[->,>=stealth] (B0.south) -- (C1.north);
    \draw[->,>=stealth] (B0.south) -- (C2.north);
    \draw[->,>=stealth] (C0.south) -- (D0.north);
    \draw[->,>=stealth] (C0.south) -- (D2.north);
    \draw[->,>=stealth] (C1.south) -- (D1.north);
    \draw[->,>=stealth] (C1.south) -- (D2.north);
    \draw[->,>=stealth] (D0.south) -- (E0.north);
  \end{tikzpicture}
  \caption{Example breadth-first search tree, read top-down.
White
nodes encode false assignments, and black nodes encode true assignments.
  }
  \label{fig:bfs}
\end{figure}

Our algorithm for generating canonical representatives is a simple breadth-first
search that relies on assignment canonization. This procedure is a kind of
\emph{isomorph-free exhaustive generation}, and there exist more sophisticated
procedures than the one we present here \citep{McKay1998}.

Let $\varx$ be some variable assignment. Then, an \emph{augmentation} of
$\varx$ is a copy of $\varx$ with one variable that was previously false
assigned to true. We denote the set of all augmentations as
$\mathcal{A}(\varx)$. Our breadth-first search tree will be defined by a series of
augmentations as follows:

\begin{enumerate}[noitemsep,leftmargin=*]
\item Nodes of the search tree are assignments $\varx$.
\item The root of the tree is the all false assignment.
\item Each level $L$ of the search tree has exactly $L$ true assignments to variables.
\item Nodes are expanded until level $|\varX|$.
\item Before expanding a node, check if it is not isomorphic to one that has
  already been expanded by computing its canonical form.
\item Then, expand a node $\varx$ by adding $\mathcal{A}(\varx)$ to the
  frontier.
\end{enumerate}

An example of this breadth-first search procedure is visualized in
Figure~\ref{fig:bfs}. The search is performed on a 4-variable factor graph that has
one factor on each edge, and all factors are symmetric. The factors are elided
in the figure for visual clarity. Each arrow represents an augmentation.  Crossed
out graphs are pruned due to being isomorphic with a previously expanded node.

Now we bound the number of required graph isomorphism calls for this search
procedure:
\begin{theorem}
  For a factor graph $\fgraph = (\varX, F)$ with $|\bool^\varX/\aut(\graph)|$
canonical representatives, the above breadth-first search requires at most
$|\varX| \times |\bool^\varX/\aut(\graph)|$ calls to a graph isomorphism tool.
\vspace{-0.2cm}
\end{theorem}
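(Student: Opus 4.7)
The plan is to count isomorphism calls by separating them into two budgets: one per expanded node, and one per augmentation produced. I will argue that only canonical representatives get expanded, and that each expansion produces at most $|\varX|$ children, each requiring a single canonical-form computation (one isomorphism-tool call).

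First I would observe that the set of nodes that are actually expanded (i.e.\ not pruned) is a set of pairwise non-isomorphic assignments under $\aut(\graph)$: by construction of step~5, whenever a frontier node $\varx$ has the same canonical form as some previously expanded node $\varx'$, it is discarded. By Theorem~\ref{thm:coloredcorrespondence}, equality of canonical forms is exactly orbit equivalence, so distinct expanded nodes lie in distinct orbits. Hence the number of expanded nodes is at most the number of orbits, i.e.\ $|\bool^\varX/\aut(\graph)|$.

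Next I would count the children: each expanded node $\varx$ sitting at level $L$ has at most $|\varX| - L \leq |\varX|$ false variables that can be flipped, so $|\aug(\varx)| \leq |\varX|$. Therefore the total number of augmentations ever placed on the frontier is bounded by $|\varX| \cdot |\bool^\varX/\aut(\graph)|$. Finally, the isomorphism tool is invoked once per augmentation, namely in the canonical-form check of step~5 before that augmentation is either expanded or pruned; no other step in the procedure touches the isomorphism tool. Combining these two bounds gives the stated $|\varX| \cdot |\bool^\varX/\aut(\graph)|$ bound.

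The only mildly delicate point, and what I expect to be the main obstacle, is pinning down precisely what counts as a graph-isomorphism call. The statement assumes that an isomorphism-equivalence test between a new candidate and the set of previously expanded canonical representatives is a single call; this is reasonable because computing the canonical form of the candidate and looking it up in a hash table of previously seen canonical forms requires exactly one canonization invocation. If instead one counted a pairwise isomorphism check against each previously expanded node, the bound would blow up quadratically, so I would briefly flag this convention. With that bookkeeping fixed, the counting argument above goes through directly.
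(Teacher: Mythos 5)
Your proposal is correct and follows essentially the same argument as the paper: bound the number of expanded nodes by the number of orbits, note that each expansion yields at most $|\varX|$ frontier additions, and charge one canonization call per frontier addition. The extra justifications you supply (why expanded nodes lie in distinct orbits, and the convention that a canonical-form computation plus hash lookup counts as one isomorphism call) are sensible elaborations of the same counting scheme rather than a different route.
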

\begin{proof}
  There are at most $|\bool^\varX / \aut(\graph)|$ expansions, and each
expansion adds at most $|\varX|$ nodes to the frontier. A canonical form must be
computed for each node that is added to the frontier.
\end{proof}

\textbf{Pruning expansions}\quad This expansion process can be further optimized by
preemptively reducing the number of nodes that are added to the frontier in
Step 6, using the following lemma:
\begin{lemma}[Expansion Pruning]
  Let $\fgraph$ be a factor graph, $\varx$ be a variable assignment, and
$\varx_1, \varx_2$ be augmentations of $\varx$ that update variables $x$ and
$y$ respectively. Then, $\varx_1 \sim \varx_2$ under $\aut(\graph)$ if $x$
and $y$ are in the same variable orbit under $\aut(\graph(\fgraph, \varx))$.
\label{lem:pruning}
\end{lemma}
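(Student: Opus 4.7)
The plan is to chain together Theorem \ref{thm:stabilizercorrespondence} (which identifies $\stab_{\aut(\graph)}(\varx)$ with $\aut(\graph(\fgraph, \varx))$) with the defining property of augmentations. Concretely, assume $x$ and $y$ lie in the same orbit under $\aut(\graph(\fgraph, \varx))$, so there exists a permutation $g \in \aut(\graph(\fgraph, \varx))$ with $g \cdot x = y$. By Theorem \ref{thm:stabilizercorrespondence}, $\aut(\graph(\fgraph, \varx)) = \stab_{\aut(\graph)}(\varx)$, so $g \in \aut(\graph)$ and $g \cdot \varx = \varx$.

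Next I would unfold what an augmentation means. Viewing $\varx$ as a set of (variable, value) pairs, the augmentation $\varx_1$ is $\varx$ with the pair $(x, \false)$ replaced by $(x, \true)$; likewise $\varx_2$ replaces $(y, \false)$ with $(y, \true)$. Since the action of $g$ on an assignment permutes the variable coordinates, applying $g$ to $\varx_1$ yields the assignment obtained from $g \cdot \varx$ by changing $g \cdot x = y$ from $\false$ to $\true$. Because $g$ stabilizes $\varx$, the ``before'' part coincides with $\varx$, and so $g \cdot \varx_1 = \varx_2$. This exhibits a witness that $\varx_1 \sim \varx_2$ under $\aut(\graph)$, completing the argument.

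The only real care-point is the bookkeeping: one must verify that the action of $g$ on $\varx_1$ really produces $\varx_2$ and not some accidental third assignment. This amounts to noting that $\varx_1$ and $\varx$ agree on every variable except $x$, that $g \cdot \varx = \varx$ forces the assignments on all non-$x$ variables to be permuted consistently with $\varx$, and that the single differing coordinate $x$ is sent to $y$ while preserving its value $\true$. No new computation or group-theoretic lemma is needed beyond the stabilizer identification already established, so I expect no genuine obstacle — the proof should fit in a few lines.
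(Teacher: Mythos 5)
Your proof is correct and follows essentially the same route as the paper's: both take the witness $g \in \aut(\graph(\fgraph,\varx))$ sending $x$ to $y$ and observe that it carries $\varx_1$ to $\varx_2$. The only cosmetic difference is that you transport $g$ into $\aut(\graph)$ via Theorem~\ref{thm:stabilizercorrespondence} and compute directly on assignments, whereas the paper reasons on the colored graphs $\graph(\fgraph,\varx_1)$ and $\graph(\fgraph,\varx_2)$ and closes with Theorem~\ref{thm:coloredcorrespondence}.
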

The proof is in Appendix~\ref{sec:pruningproof}. Using this lemma we can update
Step 6 to only include a single element of each variable orbit of $\varX$ under
$\aut(\graph(\fgraph, \varx))$.

\subsection{EXACT LIFTED INFERENCE ALGORITHM}
% \begin{figure}
%   \centering
% \begin{subfigure}[b]{0.2\linewidth}
%   \centering
%     \caption{Pairwise factor graph an evidence factor colored red.}
%     \label{fig:pairwise}
%     % \caption{Incorporating evidence into a pairwise fully connected and symmetric factor
%     %   graph of size 3. The evidence factor is colored red.}
%   \end{subfigure}~~
% \end{figure}

Now we combine the theory of the previous two sections to perform exact lifted
inference on factor graphs. Algorithm~\ref{alg:liftedinf} performs exact lifted
inference via a breadth-first search over canonical assignments. Variable $r$
holds a set of canonical representatives, $q$ holds the frontier, $p$
accumulates the unnormalized probability of the evidence, and $Z$ accumulates
the normalizing constant. A graph isomorphism tool is used to compute $\sigma$
on Line~5.
Each time the algorithm finds a new representative, it
computes the size of the orbit using the orbit stabilizer theorem on Line 9;
$\mathtt{GAP}$ is used to compute the order of these permutation groups.
Lemma~\ref{lem:pruning} is used on Line 13 to avoid adding augmentations to the frontier
that are known a-priori to be isomorphic to prior ones. This algorithm can be
easily modified to produce the MPE by simply returning the canonical
representative from $r$ with the highest probability.

\textbf{Experimental Evaluation}\quad
To validate our method we implemented Algorithm~\ref{alg:liftedinf} using the
$\mathtt{Sage}$ math library, which wraps $\mathtt{GAP}$ and a graph isomorphism
tool~\citep{sagemath}.\footnote{The source code for our
exact and approximate inference algorithms can be found at
\url{https://github.com/SHoltzen/orbitgen}.} We compared our lifted inference
procedure against $\mathtt{Ace}$, an exact inference tool for discrete Bayesian
networks that is unaware of the symmetry of the model~\citep{ChaviraDarwiche2005}.
Figure~\ref{fig:exactexp} shows experimental results for performing exact lifted
inference on two families of factor graphs.
The first is a class of pairwise factor graphs that have an identical symmetric potential
between all nodes, with one factor (in red) designated as an evidence factor:
\begin{center}
   \begin{tikzpicture}[->,>=stealth,auto,node distance=1.2em,
      thick,main node/.style={circle,draw},factor node/.style={rectangle,
        fill=black, draw}]
      \node[main node] (A) {};
      \node[factor node] (F1) [right of=A] {};
      \node[main node] (B) [right of=F1] {};
      \node[factor node] (F2) [below of=B] {};
      \node[main node] (C) [below of=F2] {};

      \node[factor node] (E) [left of=A, fill=red] {};
      \draw [-] (A) to (B);
      \draw [-] (B) to (C);
      \draw [-] (C) -- node[fill=black,yshift=0.4em,xshift=0.4em] {} (A);
      \draw [-] (A) -- (E);
    \end{tikzpicture}
  \end{center}
We also evaluated our method on the pigeonhole problem from Section~\ref{sec:motivation} with
two holes and increasing number of pigeons. In both experiments, the number of
orbits grows linearly, even though there is little independence. Thus,
$\mathtt{Ace}$ scales exponentially, since the treewidth grows quickly, while
our method scales sub-exponentially. To our knowledge, this is the first
example of performing exact inference on this family of models.
\begin{algorithm}[t]
  \KwData{A factor graph $\fgraph = (\varX, F)$ with color encoding
    $\graph$; $\aut(\graph)$-invariant evidence $\evidence{}$}
  \KwResult{The probability of evidence $\Pr(\evidence)$}
  $r \leftarrow$ empty set, $p \leftarrow 0$, $Z \leftarrow 0$\;
  $q \leftarrow$ queue containing the all-false assignment\;
  \While{$q$ is not empty}{
    $\varx \leftarrow q$.pop()\;
    $\mathtt{Canon} \leftarrow \sigma(\graph(\fgraph, \varx))$
    \tcp*{Invoke graph iso. tool}
    \If{$\mathtt{Canon} \in r$}{
      continue\;
    }
    Insert $\mathtt{Canon}$ into $r$\;
    $|\orb(\varx)| \leftarrow |\aut(\graph)| / |\aut(\graph(\fgraph, \varx))|$
    \tcp*{Invoke $\mathtt{GAP}$}
    \If{$\evidence(\varx) = \true$}{
      $p \leftarrow p + |\orb(\varx)| \times F(\varx)$\;
    }
    $Z \leftarrow Z + |\orb(\varx)| \times F(\varx)$\;
    \For{$o$ from each variable orbit of $\aut(\graph(\fgraph, \varx))$}{
      \If{$o$ is a false variable}{
        $\varx' \leftarrow \varx$ with $o$ true\;
        Append $\varx'$ to $q$\;
      }
    }
  }
  \Return $p / Z$
  \caption{$\mathtt{ExactLiftedInference}(\fgraph, \evidence)$}
  \label{alg:liftedinf}
\end{algorithm}

\begin{figure}
\begin{subfigure}[b]{0.5\linewidth}
  \centering
  \begin{tikzpicture}
	\begin{axis}[
		height=3cm,
		width=4cm,
		grid=major,
    xlabel={\# Variables},
    no markers,
    ylabel={Time (s)},
	]

  \addplot[densely dashed, very thick] coordinates {
(10,	0.335)
(15,	0.559)
(20,	1.115)
(25,	2.426)
(30,	5.298)
(35,	9.998)
(40,	20.81)
(45,	36.179)
(50,	70.331)
	};
  
  \addplot[very thick] coordinates {
    (10, 0.06)
    (15, 0.05)
    (20, 1.61)
    (22, 6.84)
    (23, 13.35)
    (24, 182.55)
	} node[draw, fill=red, circle, scale=0.4] {};
  
	\end{axis}
\end{tikzpicture}
\caption{Inference for pairwise factor graph.}
% \caption{Time to compute $\Pr(a)$ for a variable $a$ in a pairwise fully
% connected and symmetric factor graph of size $n$. \textsc{Ace} exhausts the
% memory limit at 24 nodes.}
\end{subfigure}~
  \begin{subfigure}[b]{0.5\linewidth}
  \centering
  \begin{tikzpicture}
	\begin{axis}[
		height=3cm,
		width=4cm,
		grid=major,
    no markers,
    xlabel={\# Pigeons},
    legend columns=2,
    legend style={at={(3em,6em)},anchor=north}
    % legend pos = north west
	]

  \addplot[densely dashed, very thick] coordinates {
    (2,	0.052)
    (4,	0.096)
    (8,	0.342)
    (10,	0.608)
    (12,	1.01)
    (14,	1.6298)
    (16,	2.572)
    (18,	3.898)
    (20,	5.914)
    (22,	7.1)
    (24,	9.53)
    (30, 22)
    (32, 29)
    (34, 39)
    (35, 47)
    (36, 52)
    (37, 59)
    (40, 83)
    (45, 146)
    (50, 243)
    % (55, )
  }; 
	\addlegendentry{Orbit Gen.}
  
  \addplot[very thick] coordinates {
    (3,	0.06)
    (8,	0.09)
    (12,	0.1)
    (14,	0.25)
    (16,	0.46)
    (18,	1.35)
    (20, 663)
	} node[draw, fill=red, circle, scale=0.4] {};
	\addlegendentry{$\mathtt{Ace}$}
  
	\end{axis}
\end{tikzpicture}
\caption{Inference for 2-hole pigeonhole problem.}
% \caption{Time to compute $\Pr(a)$ for a variable $a$ in a pairwise fully
% connected and symmetric factor graph of size $n$. \textsc{Ace} exhausts the
% memory limit at 24 nodes.}
\end{subfigure}

\caption{Evaluation of Algorithm~\ref{alg:liftedinf}. A red circle indicates that
  $\mathtt{Ace}$ ran out of memory at that time.}
\label{fig:exactexp}
\end{figure}
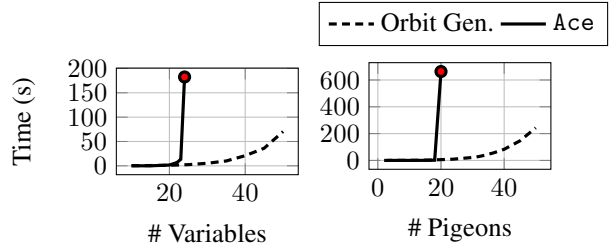

\section{ORBIT-JUMP MCMC}
\label{sec:orbsamp}
% Orbit generation is not a universal solution for inference in highly symmetric
% distributions. Generating orbits may be prohibitively expensive due to graph
% isomorphism calls, or there may be too many orbits to exhaustively enumerate. In
% these instances we would like Markov Chain Monte Carlo approaches that
% efficiently seek out and exploit high-probability regions of the probability
% space, while simultaneously utilizing the symmetric structure of the
% distribution in order to mix quickly.

In this section we introduce
\emph{orbit-jump MCMC}, an MCMC algorithm that mixes quickly when the
distribution has few orbits, at the cost of requiring multiple graph isomorphism
calls for each transition. The algorithm is summarized in Algorithm~\ref{alg:orbitjump}.
Orbit-jump MCMC is an alternative to
Lifted MCMC \citep{Niepert2012, Niepert2013} 
that
generates provably high-quality samples at the expense of more costly
transitions.
Lifted MCMC exploits symmetric structure to
quickly transition \emph{within} orbits. Lifted MCMC is
efficient to implement: it requires only a single call to a graph isomorphism
tool. However, lifted MCMC relies on Gibbs sampling to jump \emph{between
orbits}, and therefore has no guarantees about its mixing time
for distributions with few orbits.  
Orbit-jump MCMC is a Metropolis-Hastings MCMC algorithm that uses the following
distribution as its proposal:

\begin{definition}
Let $\group$ act on $\Omega$. Then for $x \in \Omega$, the \emph{uniform orbit
distribution} is:
\begin{align}
  \Pr_{\Omega / \group}(x) \triangleq \frac{1}{|\Omega/\group| \times |\orb(x)|}
\end{align}
This is the probability of uniformly choosing an
orbit $o \in \Omega / \group$, and then sampling uniformly from $\sigma^{-1}(o)$.
\end{definition}

The \emph{orbit-jump MCMC chain} for a $\group$-invariant distribution $\Pr$
is defined as follows, initialized to $x \in \Omega$:
\begin{enumerate}[noitemsep, leftmargin=*]
\item Sample $x' \sim \Pr_{\Omega / \group}$;
\item Accept $x'$ with probability $
   \min\left(1, \frac{\Pr(x') \times
    |\orb(x')|}{\Pr(x) \times |\orb(x)|}\right)$
\end{enumerate}
This Markov chain has $\Pr$ as its stationary
distribution.
% Importantly the acceptance criteria does not
% require computing $|\Omega / \group|$, a problem that is known to be $\#$P-hard
% in general \citep{Goldberg1993}.
Orbit-jump MCMC has a high probability of proposing transitions \emph{between}
orbits, which is an alternative to the
within-orbit exploration of lifted MCMC.\footnote{ This proposal is independent
  of the previous state, a scheme that is
sometimes called \emph{Metropolized independent
sampling} (MIS)~\citep{liu1996metropolized}. Importance sampling is an
alternative to MIS. We use MIS rather than importance sampling in order to make
the connection with lifted MCMC more explicit. }

Next we will describe how to sample from $\Pr_{\Omega / \group}$ using an MCMC
method known as the \emph{Burnside process}.
Then, we will discuss the mixing time of this proposal, and prove that it
mixes in the number of orbits of the distribution.

\begin{figure}
  \centering
  \begin{tikzpicture}[scale=0.7]
    \node at(-12em,0em) {$\group$};
    \node at(-12em,-4em) {$\Omega$};

    \node at(-3em, 0em) (ident) {$(A)(B)$};
    \node at(3em, 0em) (cyc) {$(A~B)$};

    \begin{scope}[shift={(-9em,-4em)}, local bounding box=A0]
      \node[draw, circle, scale=0.5] (A) {};
      \node[draw, circle, right=0.2cm of A, scale=0.5] (B) {};
      \node[below=0.1cm of A] {$A$};
      \node[below=0.1cm of B] {$B$};
      \draw[-, thick] (A) -- (B);
    \end{scope}

    \begin{scope}[shift={(-3em,-4em)}, local bounding box = A1 ]
      \node[draw, circle, fill=black, scale=0.5] (A) {};
      \node[draw, circle, right=0.2cm of A, scale=0.5] (B) {};
      \draw[-, thick] (A) -- (B);
    \end{scope}

    \begin{scope}[shift={(3em,-4em)},local bounding box = A2 ]
      \node[draw, circle, scale=0.5] (A) {};
      \node[draw, circle, right=0.2cm of A, , fill=black, scale=0.5] (B) {};
      \draw[-, thick] (A) -- (B);
    \end{scope}
    \begin{scope}[shift={(9em,-4em)},local bounding box = A3 ]
      \node[draw, circle, fill=black, scale=0.5] (A) {};
      \node[draw, circle, right=0.2cm of A, fill=black, scale=0.5] (B) {};
      \draw[-, thick] (A) -- (B);
    \end{scope}

    \draw[-] (A0.north) -- (ident.south);
    \draw[-] (A1.north) -- (ident.south);
    \draw[-] (A2.north) -- (ident.south);
    \draw[-] (A3.north) -- (ident.south);

    \draw[-] (A0.north) -- (cyc.south);
    \draw[-] (A3.north) -- (cyc.south);
    
    % \draw[-, thick] (1em, -6.5em) -- (ident.west);
    % \draw[-, thick] (1em, -9.5em) -- (ident.west);
    % \draw[-, thick] (1em, -12.5em) -- (ident.west);
    % \draw[-, thick] (1em, -3.5em) -- (cyc.west);
    % \draw[-, thick] (1em, -12.5em) -- (cyc.west);
  \end{tikzpicture}
  \caption{Illustration of the Burnside process on a colored graph with two
    nodes and two colors.
    % There
    % is an edge connecting a colored graph $\varx$ and $g \in \group$
    % iff $g \cdot \varx = \varx$, where $g$ acts on the vertices of the graph by
    % permuting them. The Burnside process is a random walk on this
    % graph.
  }
  \label{fig:burnside}
\end{figure}
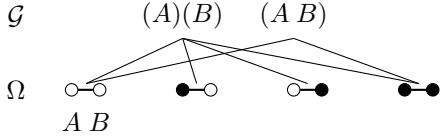

\subsection{SAMPLING FROM $\Pr_{\Omega /\group}$}
\citet{Jerrum1992} gave an MCMC technique known as the \emph{Burnside process}
for drawing samples from $\Pr_{\Omega / \group}$. The Burnside process is a
Markov Chain Monte Carlo method defined as follows, beginning from some $x \in
\Omega$:
\begin{enumerate}[noitemsep, leftmargin=*]
\item Sample $g \sim \stab(x)$ uniformly;
\item Sample $x \sim \fix(g)$ uniformly, where $\fix(g) = \{x \in \Omega \mid g
\cdot x = x\}$. We call elements of $\fix(g)$ \emph{fixers}.
\end{enumerate}

\begin{theorem}[\citet{Jerrum1992}] The stationary distribution of the Burnside
process is equal to $\Pr_{\Omega / \group}$.
\end{theorem}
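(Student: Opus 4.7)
The plan is to prove the result by verifying detailed balance of the Burnside transition kernel with respect to $\Pr_{\Omega / \group}$, so that reversibility (together with irreducibility and aperiodicity) identifies it as the unique stationary distribution.

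First I would write the one-step transition kernel explicitly. Starting at $x$, the chain samples $g \in \stab(x)$ uniformly and then $y \in \fix(g)$ uniformly. Since $y \in \fix(g)$ is equivalent to $g \in \stab(y)$, the transition probability is
\begin{align*}
  P(x, y) \;=\; \frac{1}{|\stab(x)|} \sum_{g \in \stab(x) \cap \stab(y)} \frac{1}{|\fix(g)|}.
\end{align*}
Next, by the orbit–stabilizer theorem, $|\orb(x)| = |\group|/|\stab(x)|$, so the proposed stationary distribution can be rewritten as
\begin{align*}
  \Pr_{\Omega / \group}(x) \;=\; \frac{1}{|\Omega/\group| \cdot |\orb(x)|} \;=\; \frac{|\stab(x)|}{|\Omega/\group| \cdot |\group|}.
\end{align*}

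Then I would check detailed balance. Multiplying,
\begin{align*}
  \Pr_{\Omega/\group}(x)\, P(x,y) \;=\; \frac{1}{|\Omega/\group| \cdot |\group|} \sum_{g \in \stab(x) \cap \stab(y)} \frac{1}{|\fix(g)|},
\end{align*}
which is manifestly symmetric in $x$ and $y$. Hence $\Pr_{\Omega/\group}(x)\,P(x,y) = \Pr_{\Omega/\group}(y)\,P(y,x)$, so $\Pr_{\Omega/\group}$ is a reversible distribution for the chain.

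Finally I would verify that the chain has a unique stationary distribution. Since the identity $e \in \group$ lies in every $\stab(x)$ and $\fix(e) = \Omega$, we have $P(x,y) \geq 1/(|\stab(x)|\cdot|\Omega|) > 0$ for every pair $x,y \in \Omega$, so the chain is irreducible and aperiodic (indeed $P(x,x) > 0$). Reversibility then forces $\Pr_{\Omega/\group}$ to be the unique stationary distribution, completing the proof. The only mildly subtle step is the rewriting of $\Pr_{\Omega/\group}$ via orbit–stabilizer so that the stabilizer sizes on the two sides of detailed balance cancel; after that, symmetry of the surviving sum does all the work.
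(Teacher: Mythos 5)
Your proof is correct. The paper does not prove this theorem itself---it is imported directly from \citet{Jerrum1992}---so there is no internal proof to compare against, but your detailed-balance argument (rewriting $\Pr_{\Omega/\group}(x)$ as $|\stab(x)|/(|\Omega/\group|\,|\group|)$ via orbit--stabilizer so that the stabilizer sizes cancel against the kernel, leaving a sum over $\stab(x)\cap\stab(y)$ weighted by $1/|\fix(g)|$ that is manifestly symmetric in $x$ and $y$) is precisely the standard reversibility proof given in that reference, and your observation that the identity permutation yields $P(x,y)>0$ for every pair correctly supplies irreducibility and aperiodicity, hence uniqueness of the stationary distribution.
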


This process can be visualized as a random walk on a bipartite graph. One set of
nodes are elements of $\Omega$, and the other set are elements of $\group$. There
is an edge between $x \in \Omega$ and $g \in \group$ iff $g \cdot x = x$.

An example of this bipartite graph is shown in Figure~\ref{fig:burnside}. The
set $\Omega$ is the set of 2-node colored graphs, and the group $\group = S_2$ permutes the
vertices of the graph. The identity element $(A)(B)$ stabilizes all elements of
$\Omega$, and so has an edge to every element in $x$; $(A~B)$
only stabilizes graphs whose vertices have the same color.

\citet{Jerrum1992} proved that the Burnside process mixes rapidly for several
important groups, but it does not always mix quickly \citep{Goldberg2002}. In
such cases, it is important to draw sufficient samples from the Burnside process
in order to guarantee that the orbit-jump proposal is unbiased. Next we will
describe how to implement the Burnside process on factor graphs using the
machinery from Section~\ref{sec:encoding}.

% \begin{proof}
%   For completeness, we provide a proof in the appendix. \steven{TODO: I think
%     this would be very useful, since Jerrum's paper is not very accessible.}
% \end{proof}
% In the context of discrete Ising models, this procedure corresponds to the
% well-known Swendson-Wang process \citep{Jerrum1992, Swendsen1987}. For a
% particularly lucid proof of the above theorem, see \citet{Goldberg2001}. The
% Burnside process is a remarkably simple and generic procedure for sampling
% orbits. So, what's the catch? There are two questions raised by the Burnside
% process:
% \begin{enumerate}[noitemsep]
% \item Can the transition steps be made efficiently?
% \item How many transitions are required for the chain to be close to
%   stationarity?
% \end{enumerate}

% The following subsections discuss these algorithmic considerations. In
% particular, we will primarily discuss the feasibility of the Burnside process in
% the P\`olya theory setting, where a group $G$ acts on a set $X$ to induce a set
% of P\`olya orbits in $X \rightarrow Y$, as this is the setting most relevant to
% discrete probabilistic graphical models.

\subsubsection{Burnside Process on Factor Graphs}
For $\group$ acting on a set of variables $\varX$, the Burnside process requires
the ability to (1) draw samples uniformly from the stabilizer subgroup of an
assignment to variables, and (2) sample a random fixer for any group element in
$\group$. Here we describe how to perform these two computations for a colored
factor graph $\fgraph = (\varX, F)$.\footnote{This process is conceptually similar to the procedure
for randomly sampling orbits in the P\'olya-theory setting described by
\citet{Goldberg2001}, but this is the first time that this procedure is applied
directly to factor graphs} This procedure is summarized in lines 3--7 in
Algorithm~\ref{alg:orbitjump}. 

\textbf{Stabilizer Sampling}\quad
Section~\ref{sec:encoding} showed how to compute the stabilizer group of $\varx \in
\bool^\varX$ using graph isomorphism tools. To sample uniformly from this
stabilizer group,
we rely on the \emph{product replacement algorithm}, which is an efficient
procedure for uniformly sampling group elements \citep{Pak00}.

\textbf{Fixer Sampling}\quad
Let $g \in \group$ be a permutation that acts on the vertices of a colored
factor graph. Then we uniformly sample an assignment-encoded colored factor
graph that is fixed by $g$ in the following way. First, decompose $g$ into a
product of disjoint cycles. Then, for each cycle that
contains variable nodes, choose a truth assignment uniformly randomly, and then
color the vertices in that
cycle with that color. This colored graph is fixed by $g$
and is uniformly random by the independence of coloring each cycle and the fact
that all colorings fixed by $g$ can be obtained in this manner.
% This is done on
% lines 4--6 in Algorithm~\ref{alg:orbitjump}.

% First, we discus what it means
% for a particular group operation to be efficient. Then, we discuss stabilizer
% and fixer computation.

% \paragraph{Efficiency of Group Operations} Let $G \le S_n$, i.e., $G$ is a
% subgroup of the symmetric group on $n$ elements. $G$ is \emph{generated by} a
% finite set permutations $S$, written $G = \langle S \rangle$, if $G$ any element
% in $G$ can be written as a product of elements in $S$. Groups typically have a
% small number generating elements. Following \citet{Seress2003}, we say a
% particular group operation is \emph{efficient} if it is polynomial in $|S|
% \times n$, where in practice $|S|$ is often small.
\begin{algorithm}[t]
  \KwData{A factor graph $\fgraph = (\varX, F)$, a point $\varx \in
    \bool^\varX$, number of Burnside process steps $k$}
  \caption{A step of Orbit-jump MCMC}
  \label{alg:orbitjump}
  $\varx' \leftarrow \varx$\;
  \For{$i \in \{1, 2, \cdots, k\}$}{
    $\group_{\stab} \leftarrow \aut(\graph(\fgraph, \varx'))$
     \tcp*{Invoke graph iso. tool}
  Sample $s \sim \group_{\stab}$ using product replacement\;
  \For{Each variable cycle $c$ of $s$}{
    $v \sim $ Bernoulli($1/2$)\;
    Assign all variables $c$ in $\varx'$ to $v$\;
  }
  }
  % $|\orb(\varx)| \leftarrow |\aut(\graph(\fgraph, \varx))| / |\aut(\graph)|$\;
  % $|\orb(\varx')| \leftarrow |\aut(\graph(\fgraph, \varx'))| / |\aut(\graph)|$\;
  Accept $\varx'$ with probability $\min\Big(1, \frac{F(\varx') \times
    |\orb(\varx')|}
  {F(\varx) \times |\orb(\varx)|}\Big)$
\end{algorithm}

\subsection{MIXING TIME OF ORBIT-JUMP MCMC}
% Orbit-jump MCMC can be cleanly integrated with other
% symmetry-aware MCMC methods, such as lifted MCMC, by alternating between
% orbit-jumping and orbital MCMC sampling \citep{Haggstrom2000}. In this section
% we bound the mixing time of orbit-jump MCMC by the number of distinct orbits of
% a $\group$-invariant probability distribution.
The \emph{total variation distance} between two discrete probability measures
$\mu$ and $\nu$ on $\Omega$, denoted $d_{TV}(\mu, \nu)$, is:
\begin{align}
  \label{eq:tv}
  d_{TV}(\mu, \nu) = \frac{1}{2}\sum_{x \in \Omega} |\mu(x) - \nu(x)|.
\end{align}
The \emph{mixing time} of a Markov chain is the minimum number of iterations
that the chain must be run starting in any state until the total variation
distance between the chain and its stationary distribution is less than some
parameter $\varepsilon > 0$. The mixing time of orbit-jump MCMC can be bounded in
terms of the number of orbits, which is a property not enjoyed by lifted MCMC:
\begin{theorem}
  Let $\Pr$ be a $\group$-invariant distribution on $\Omega$ and let $P$ be the
  transition matrix of orbit-jump MCMC. Then, for any $x \in \Omega$,
  $d_{TV}(P^tx, \Pr) \le \left( \frac{|\Omega / \group|-1}{|\Omega /\group|} \right)^t$.
  It follows that for any
$\varepsilon > 0$,  $d_{TV}(P^tx, \Pr) \le \varepsilon$ if $t \ge
\log(\varepsilon^{-1}) \times |\Omega / \group|$.
  \label{thm:mixing}
\end{theorem}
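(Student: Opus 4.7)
The plan is to recognize orbit-jump MCMC as an instance of Metropolized Independent Sampling---the proposal $\Pr_{\Omega/\group}$ does not depend on the current state---and to derive the bound from a standard Doeblin minorization. For $x'\neq x$ the one-step kernel is
$P(x,x')=\Pr_{\Omega/\group}(x')\min\bigl(1,\tfrac{\Pr(x')\Pr_{\Omega/\group}(x)}{\Pr(x)\Pr_{\Omega/\group}(x')}\bigr)=\min\bigl(\Pr_{\Omega/\group}(x'),\,\tfrac{\Pr_{\Omega/\group}(x)}{\Pr(x)}\Pr(x')\bigr)$,
and the self-loop contribution $P(x,x)$ is at least $\Pr_{\Omega/\group}(x)$, coming from proposing $x$ itself (which is always accepted).

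The key inequality is a uniform lower bound on $\Pr_{\Omega/\group}(x)/\Pr(x)$. Using the definition of the uniform orbit distribution together with $\group$-invariance of $\Pr$ (so all elements of a common orbit share the same probability), I would compute $\Pr(x)/\Pr_{\Omega/\group}(x)=|\Omega/\group|\cdot|\orb(x)|\cdot\Pr(x)=|\Omega/\group|\cdot\Pr(\orb(x))\le|\Omega/\group|$, hence $\Pr_{\Omega/\group}(x)/\Pr(x)\ge 1/|\Omega/\group|$ for every $x$. Substituting into the kernel expression yields, for every $x'$ (including $x'=x$), $P(x,x')\ge\Pr(x')/|\Omega/\group|$, and summing gives the Doeblin minorization $P(x,A)\ge\tfrac{1}{|\Omega/\group|}\Pr(A)$ uniformly in $x\in\Omega$ and $A\subseteq\Omega$.

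From here the standard coupling argument for uniformly minorized chains---couple $P$ with an independent sampler from $\Pr$, which succeeds at each step with probability at least $1/|\Omega/\group|$---gives $d_{TV}(P^t\delta_x,\Pr)\le (1-1/|\Omega/\group|)^t=\bigl(\tfrac{|\Omega/\group|-1}{|\Omega/\group|}\bigr)^t$. The second statement follows from $1-1/n\le e^{-1/n}$ by taking logs on both sides of $(1-1/|\Omega/\group|)^t\le\varepsilon$. I expect the main subtlety to lie not in the high-level argument but in cleanly handling the self-loop probability $P(x,x)$---which receives contributions both from proposing $x$ and from rejecting proposals of other states---and in verifying that the minorizing measure is exactly $\Pr$ rather than some approximation. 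Once that bookkeeping is done, the rest is a direct application of Doeblin's theorem.
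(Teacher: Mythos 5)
Your proof is correct, but it takes a genuinely different route from the paper's. The paper first factors $\Pr$ into a between-orbit distribution on $\Omega/\group$ and a uniform within-orbit distribution, proves a lemma showing the total variation distance collapses to that of the between-orbit marginals, and then runs a bespoke coupling on the quotient chain in which both copies coalesce whenever they jointly propose the maximum-probability orbit $M$ (which happens with probability $1/|\Omega/\group|$ per step since the orbit proposal is uniform). You instead work directly on $\Omega$: you observe that orbit-jump MCMC is an independence sampler and establish the uniform minorization $P(x,x') \ge \Pr(x')/|\Omega/\group|$ from the key inequality $\Pr_{\Omega/\group}(x)/\Pr(x) = \bigl(|\Omega/\group|\cdot\Pr(\orb(x))\bigr)^{-1} \ge 1/|\Omega/\group|$, then invoke the standard Doeblin/coupling bound. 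Both arguments are sound and yield the identical rate $\bigl(\tfrac{|\Omega/\group|-1}{|\Omega/\group|}\bigr)^t$; your handling of the self-loop term and the final logarithmic estimate are also fine. What the paper's decomposition buys is an explicit picture of the dynamics --- convergence is entirely governed by the orbit-level chain, with the within-orbit component matched exactly --- and it reuses machinery (the product lemma) stated elsewhere in the appendix. What your route buys is economy and a strictly stronger intermediate result: you need no quotient-chain construction or product lemma, and your minorization constant is actually $\beta = \bigl(|\Omega/\group|\cdot\max_{o}\Pr_B(o)\bigr)^{-1} \ge 1/|\Omega/\group|$, so the same argument gives a sharper bound whenever no single orbit carries most of the mass; this is essentially the classical analysis of Metropolized independent sampling that the paper itself cites in a footnote but does not exploit in its proof.
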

  For a detailed proof see Appendix~\ref{sec:mixing}. Note that the bound on
this mixing time does not take into account the cost of drawing samples from
$\Pr_{\Omega / \group}$, which involves multiple graph isomorphism calls.

\textbf{Pigeonhole case study}\quad
We implemented the orbit-jump MCMC procedure on factor graphs using
$\mathtt{Sage}$. In order to evaluate the performance of orbit-jump MCMC, we
will compare the total variation distance of various MCMC procedures.
We experimentally compare the mixing time of
lifted MCMC~\citep{Niepert2012, Niepert2013} and our orbit-jump MCMC in
Figure~\ref{fig:mixing}, which computes the total variation distance of these
two MCMC methods from their stationary distribution as a function of the number
of iterations on two versions of the pigeonhole problem.\footnote{In these
  experiments, for each step of orbit-jump MCMC, we use $7$ steps of the
  Burnside process. } The first version in
Figure~\ref{fig:samplepigeon} is the motivating example with hard constraints
from Section~\ref{sec:motivation}. The second version in
Figure~\ref{fig:samplequantum} shows a \emph{``quantum'' pigeonhole problem},
where the constraint in Equation~\ref{eq:pigeonfactor} is relaxed so that
pigeons are allowed to be placed into multiple holes.

\begin{figure}
  \centering

  \begin{subfigure}{0.5\linewidth}
  \vspace{0.73cm}
    \centering
   \begin{tikzpicture}
	\begin{axis}[
		height=3.5cm,
		width=4cm,
		grid=major,
    ymax=1,
    % ymode=log,
    no markers,
    every axis plot/.append style={very thick},
    xlabel={\# Iterations},
    ylabel={$d_{TV}$},
	]

  \addplot+ [
  % error bars/.cd,
  % y fixed,
  % y explicit,
  % y dir = both
  densely dotted,
  red
  ] table [
  x=n,
  y=tv,
  % y error = std,
  ] {data/5pigeon-gibbs.txt};

  \addplot+ [
  % y fixed,
  blue,
  dash dot,
  ] table [
  x=n,
  y=tv,
  ] {data/upper-bound.txt};

  \addplot+ [
  % y fixed,
  black
  ] table [
  x=n,
  y=tv,
  ] {data/5pigeon-jump.txt};

	\end{axis}
\end{tikzpicture}
\caption{Hard pigeonhole.}
\label{fig:samplepigeon}
\end{subfigure}~
\begin{subfigure}{0.5\linewidth}
  \centering
  \begin{tikzpicture}
	\begin{axis}[
		height=3.5cm,
		width=4cm,
		grid=major,
    ymax=1,
    % ymode=log,
    no markers,
    every axis plot/.append style={very thick},
    ymajorticks = false,
    xlabel={\# Iterations},
    legend style={at={(5.2em,7.7em)},anchor=north}
	]

  \addplot+ [
  dash dot,
  % y fixed,
  blue,
  ] table [
  x=n,
  y=tv,
  ] {data/upper-bound.txt};
  \addlegendentry{UB}

    \addplot+ [
  % error bars/.cd,
  % y fixed,
  % y explicit,
  % y dir = both
  densely dotted,
  red
  ] table [
  x=n,
  y=tv,
  % y error = std,
  ] {data/quantum-gibbs.txt};
  \addlegendentry{Lifted}

  \addplot+ [
  % y fixed,
  black,
  ] table [
  x=n,
  y=tv,
  ] {data/quantum-jump.txt};
  \addlegendentry{Orbit-Jump}

\end{axis}
\end{tikzpicture}
\caption{Quantum pigeonhole.}
\label{fig:samplequantum}
\end{subfigure}
  \caption{Total variation distance between Markov chains and their stationary
distributions for a pigeonhole problem with 5 pigeons and 2 holes. ``Lifted'' is
lifted MCMC~\citep{Niepert2012} and ``UB'' is the upper bound predicted by
Theorem~\ref{thm:mixing}.}
  \label{fig:mixing}
\end{figure}
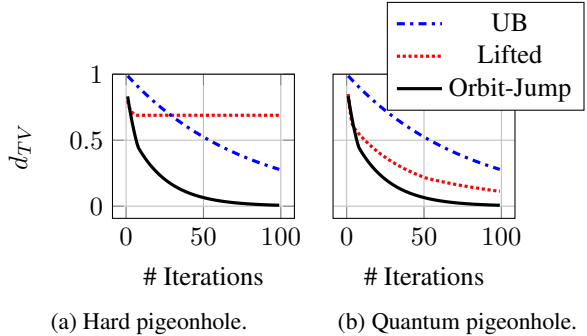

Lifted MCMC fails to converge in Figure~\ref{fig:samplepigeon} because it cannot
transition due to the hard constraint from Equation~\ref{eq:pigeonfactor}; this
illustrates that lifted MCMC can fail even for distributions with few orbits. In
addition to comparing against lifted MCMC, we also compare the theoretical upper
bound from Theorem~\ref{thm:mixing} against the two mixing times. This upper
bound only depends on the number of orbits, and does not depend on the
parameterization of the distribution.\footnote{For this example, there are 78 orbits.}
Orbit-jump MCMC converges to the true
distribution in both cases faster
than lifted MCMC, and the upper bound ensures that orbit-jump MCMC cannot
get stuck in low-probability orbits. Note however that lifted MCMC transitions
are less expensive to compute than orbit-jump MCMC transitions. We hope to
explore this practical tradeoff
between sample quality and the cost of drawing a sample in future work.

\section{RELATED WORK}
\textbf{Lifted inference}\quad
Existing exact lifted inference algorithms apply to relational models
~\citep{Getoor2007}. The tractability of exact lifted inference was studied by
\citet{NiepertAAAI14}, but their approach cannot be directly applied to factor
graphs. Approximate lifted inference can be applied to factor graphs, but
existing approaches do not provably mix quickly in the number of orbits~\citep{Niepert2012,
Niepert2013, Bui2013, VdBAAAI15a, MadanAMS18, Kersting2009, GogateJV12}. 

\textbf{Symmetry in constraint satisfaction and logic}\quad Some techniques for satisfiability and constraint satisfaction also exploit symmetry. The goal in that
context is to quickly select one of many symmetric candidate solutions, so a
key difference is that in our setting we must exhaustively explore the search space. \citet{Sabharwal2006} augments a SAT-solver with symmetry-aware
branching capabilities. Symmetry has also been exploited in integer-linear
programming \citep{Margot2010SymmetryII, Ostrowski2007OrbitalB,
  Margot2003ExploitingOI}.

\section{CONCLUSION \& FUTURE WORK}
In this paper we provided the first exact and approximate lifted inference
algorithms for factor graphs that provably scale in the number of orbits.
However, our methods are limited: there are tractable highly symmetric
distributions that still have too many orbits for our methods to be effective.
Existing lifted inference algorithms utilize independence to extract highly
symmetric sub-problems, which is an avenue that we can see for integrating
independence into this current approach. A further limitation of our approach is
that we exploit only symmetries on variables; additional
forms of symmetries, such as block symmetries, are beyond the scope of our
current algorithms~\citep{MadanAMS18}.

% Our exact inference procedure could be improved by incorporating more
% sophisticated techniques for isomorph-free exhaustive generation, such as
% canonical deletion \citep{McKay1998}. It is also plausible that the general
% framework of isomorph-free exhaustive generation may find application in other
% lifted inference tasks.

\section*{ACKNOWLEDGMENTS}
This work is partially supported by NSF grants \#IIS-1657613,
\#IIS-1633857, \#CCF-1837129, DARPA XAI grant \#N66001-17-2-4032, NEC Research,
a gift from Intel, and a gift from Facebook Research. The authors would like to
thank Tal Friedman, Pasha Khosravi, Jon Aytac, Philip Johnson-Freyd, Mathias
Niepert, and Anton Lykov for helpful discussions and feedback on drafts.
    %     Finally, we thank our anonymous reviewers.
% Acknolwedgments shoud not be part of the main paper body >:( 

% \clearpage
\bibliographystyle{abbrvnat}
\bibliography{bib}

\appendix
\onecolumn
\section{Notation}
\label{sec:notation}
\begin{center}
 \begin{tabular}{cp{10cm}}
  \toprule
  Symbol & Meaning \\
  \midrule
   $\group$ & A group \\
   $\Omega$ & A set \\
   $x$ & An element of $\Omega$ \\
   $g \cdot x$ & Apply bijection $g \in \group$ to element $x \in \Omega$ \\
   $\graph$ & A graph \\
   $\graph_1 \cong \graph_2$ & $\graph_1$ is color-automorphic to $\graph_2$\\
   $\aut(\graph)$ & The automorphism group of $\graph$ \\
   $\bool$ & Set of Booleans, $\bool = \{\true, \false\}$ \\
   $\varX$ & A set of variables \\
   $\bool^\varX$ & The set of all possible assignments to $\varX$ \\
   $\varx$ & A variable assignment, $\varx \in \bool^\varX$ \\
   $\varx_1 \sim \varx_2$ & Orbit equivalence relation \\
   $d_{TV}$ & Total variation distance \\
   $\Omega / \group$ & Quotient of $\group$ acting on $\Omega$ \\
   $\sigma$ & Canonization function \\
   $\fgraph$ & Factor graph \\
   $\graph(\fgraph, \varx)$ & Colored graph assignment-encoding of $\varx$ \\
   $P_x^t$ & Distribution of Markov chain $P$ after $t$ steps beginning in state $x$.\\
  \bottomrule
\end{tabular}
\end{center}
\section{Proofs}
\subsection{Proof of Theorem~\ref{thm:coloredcorrespondence}}
\label{sec:coloredcorrespondenceproof}
  Let $\graph_1 = (V_1, E_1, C_1) = \graph(\fgraph, \varx)$ and $\graph_2 =
  (V_2, E_2, C_2) =  g \cdot
\graph(\fgraph, \varx)$. Assume $\varx \sim \varx'$. Then there exists an
element $g \in \aut(\graph)$ such that $g \cdot \varx = \varx'$. First we show
colors are preserved. By construction of the colored assignment encoding, for
any variable node $v \in \graph_1$, $\col(v,C_1) = \col(g \cdot v,C_2)$. The
colors of factor nodes are preserved because $\aut(\graph)$ by definition
preserves them. The fact that $g \in \aut(\graph)$ directly implies that vertex
neighborhoods are preserved. Then $\graph_1 \cong \graph_2$.

Assume $\graph_1 \cong \graph_2$; then there exists $g \in \aut(\graph)$ such
that $g \cdot \graph_1 = \graph_2$. By the construction of the colored encoding,
this $g$ also preserves the colors of the variable vertices, so $g \cdot \varx
= \varx'$.

\subsection{Proof of Theorem~\ref{thm:stabilizercorrespondence}}
\label{sec:stabilizercorrespondence}
  Let $\graph_1 = \graph(\fgraph, \varx)$ and $g \in
\stab_{\aut(\graph)}(\varx)$. Then $g \cdot \graph_1$ fixes all colors and
vertices, since $\graph_1$ is bipartite and $g$ fixes all variable nodes. Then,
$g \in \aut(\graph(\fgraph, \varx))$. Now let $g \in \aut(\graph(\fgraph,
\varx))$. By definition, $g$ fixes the colors of the variable nodes, so $g \in
\stab(\varx)$.

\subsection{Proof of Lemma~\ref{lem:pruning}}
\label{sec:pruningproof}
  Let $\graph_1 = \graph(\fgraph, \varx_1)$ and $\graph_2 = \graph(\fgraph,
\varx_2)$. Assume $x$ and $y$ are in the same orbit under $\aut(\graph(\fgraph,
\varx))$; then there exists $g \in \aut(\graph(\fgraph, \varx))$ such that $g
\cdot x = y$. There is only one vertex color that differs between $\graph_1$
and $\graph_2$: $x$ and $y$. Then, $g \cdot \graph_1 = \graph_2$, so
Theorem~\ref{thm:coloredcorrespondence} then shows $\varx_1 \sim \varx_2$.

\subsection{Proof of Theorem~\ref{thm:mixing}}
  
\label{sec:mixing}
% We restate the theorem for convenience:
% \begin{theorem}
%   Let $\Pr$ be a $\group$-invariant distribution on $\Omega$ and let $P$ be the
%   transition matrix of orbit-jump MCMC. Then, any $x \in \Omega$,
%   $d_{TV}(P^tx, \Pr) \le \left( \frac{|\Omega / \group|-1}{|\Omega/\group|} \right)^t$.
%   It follows that for any
%   $\varepsilon > 0$,  $d_{TV}(P^tx, \Pr) \le \varepsilon$ if $t \ge
%   \log(\varepsilon^{-1}) \times |\Omega / \group|$.
% \end{theorem}

The proof will proceed as follows. First, we will split up $\Pr$ into two
distributions: a between-orbit distribution, which describes the probability of
transitioning between two orbits, and a within-orbit distribution, which is
uniform. We will bound the total variation distance for these two
quantities, and combine these results to get a bound on the total variation
distance on the original distribution using the following lemma:

\begin{lemma}
  \label{lem:prod}
  Let $\mu(x,y)$ and $\nu(x,y)$ be two distributions on $X \times Y$. Let
$\mu_x(x) = \sum_y \mu(x,y)$, defined similarly for $\nu$. If for all $(x,y) \in
X \times Y$ it holds that $\Pr_\mu(y \mid x) = \Pr_\nu(y \mid x)$, then
$d_{TV}(\mu, \nu) = d_{TV}(\mu_x, \nu_x)$.
\end{lemma}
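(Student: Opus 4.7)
The plan is to unpack the definition of total variation distance and factor each joint distribution into a marginal times the shared conditional. Since the conditional is common to both $\mu$ and $\nu$ by hypothesis, it will factor out of the absolute value, and summing it over $y$ will collapse to $1$, leaving the marginal TV distance.

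First I would write $\mu(x,y) = \mu_x(x)\Pr_\mu(y\mid x)$ whenever $\mu_x(x) > 0$, and analogously for $\nu$, and handle the degenerate case where $\mu_x(x) = 0$ separately (there $\mu(x,y)=0$ and the hypothesis on conditionals is vacuous, but the argument still carries through since one can pick any common conditional). Then, using the shared conditional $p(y\mid x) \triangleq \Pr_\mu(y\mid x) = \Pr_\nu(y\mid x)$, I would compute
\begin{align*}
d_{TV}(\mu,\nu) &= \tfrac{1}{2}\sum_{x,y} \bigl|\mu_x(x)\,p(y\mid x) - \nu_x(x)\,p(y\mid x)\bigr| \\
&= \tfrac{1}{2}\sum_{x} |\mu_x(x)-\nu_x(x)|\sum_{y} p(y\mid x) \\
&= \tfrac{1}{2}\sum_{x} |\mu_x(x)-\nu_x(x)| = d_{TV}(\mu_x,\nu_x),
\end{align*}
pulling $p(y\mid x) \ge 0$ outside the absolute value and then using $\sum_y p(y\mid x) = 1$.

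The step that requires a little care is handling points $x$ with $\mu_x(x) = 0$ (or $\nu_x(x) = 0$), where the conditional is not uniquely defined; I would simply note that in that case $\mu(x,y) = 0$ for every $y$ and the factorization $\mu(x,y) = \mu_x(x)\,p(y\mid x)$ holds trivially with any choice of $p(\cdot\mid x)$, so the calculation above is unaffected. Everything else is a one-line manipulation, so I do not anticipate any real obstacle beyond this bookkeeping.
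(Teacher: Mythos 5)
Your proof is correct and follows essentially the same route as the paper's: factor each joint as marginal times the (shared) conditional, pull the common nonnegative conditional out of the absolute value, and collapse $\sum_y p(y\mid x)=1$. The only difference is that you explicitly address the degenerate case $\mu_x(x)=0$, which the paper's chain-rule step silently glosses over; this is a harmless but welcome bit of added rigor.
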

\begin{proof}
  \begin{align*}
    d_{TV}(\mu, \nu)
    =& \frac{1}{2}\sum_{x,y} \left| \Pr_\mu(x,y) - \Pr_\nu(x,y)  \right| \\
    =& \frac{1}{2}\sum_{x,y} \left| \Pr_\mu(y \mid x) \Pr_{\mu_x}(x) - \Pr_\nu(y \mid x) \Pr_{\nu_x}(x)  \right| & \text{Chain rule} \\
    =& \frac{1}{2}\sum_{x,y} \Pr_\mu(y \mid x) \times \left| \Pr_{\mu_x}(x) - \Pr_{\nu_x}(x) \right|
     & \text{Since } 0 \le \Pr_\mu(y \mid x) = \Pr_\nu(y \mid x) \le 1 \\
    =& \frac{1}{2} \sum_x \bigg( \left| \Pr_{\mu_x}(x) - \Pr_{\nu_x}(x) \right| \times
       \underbrace{\sum_y \Pr_\mu(y \mid x)}_{=1} \bigg)\\
    % =& \frac{1}{2} \sum_x \left| \Pr_{\mu_x}(x) - \Pr_{\nu_x}(x) \right| \\
    =& d_{TV}(\mu_x, \nu_x).
  \end{align*}
\end{proof}
Now we begin the main proof. Let $\Pr(x)$ be a $\group$-invariant distribution
on a set $\Omega$, and let $P_x^t(y)$ be the probability of transitioning from a
state $x$ to a state $y$ after $t$ steps under the orbit-jump proposal. We can
write $\Pr(x)$ as a product of a \emph{between-orbit} ($\Pr_B$) and
\emph{within-orbit} $(\Pr_W)$ distribution, where $\Pr_B$ is a distribution on
$\Omega / \group$ and $\Pr_W$ is a distribution on $\Omega$:
\begin{align}
  \Pr(x) = \underbrace{\Pr(x) \times |\orb(x)|}_{\Pr_B(\sigma(x))} \times
  \underbrace{\frac{1}{|\orb(x)|}}_{\Pr_W(x \mid \sigma(x))}
\end{align}
% \guy{You are trying to use the product rule above but then you need to condition
% on the orbit. Because $\Pr_W(x)$ is different for each orbit. Hence
% $\Pr_W(x|\sigma(x))$. Otherwise you might have an error below when you
% incorrectly assume that $\Pr_W(x)$ is fixed when $\sigma(x)$ changes (TBD).}
I.e., for some $o \in \Omega / \group$, for some $x \in \sigma^{-1}(o)$, $\Pr_B(o)
= \Pr(x) \times |\orb(x)|$.
Similarly, the distribution $P^t_x$ can be divided into a between-orbit and
within-orbit component. We define a new Markov chain $B$ \emph{between orbits}
that has the following transition rule from some initial state $\sigma(x) \in
\Omega / \group$:

\begin{enumerate}[noitemsep]
\item Sample $x' \sim \Pr_{\Omega / \group}$
\item Accept $\sigma(x')$ with probability $\frac{\Pr(x') \times
    |\orb(x')|}{\Pr(x) \times |\orb(x)|}$.
\end{enumerate}
Then, for some $y \in \Omega$ and $\hat{y} = \sigma(y)$,
\begin{align}
  P^t_x(y) = B^t_{\sigma(x)}(\hat{y}) \times \Pr_W(y \mid \hat{y}),
\end{align}
where we used the important fact that the orbit-jump proposal that defines
$P^t_x$ is uniform
within orbits. Now we can rewrite the total variation distance that we wish to
upper-bound:
\begin{align}
  \begin{split}
    d_{TV}\big(P_x^t(y), \Pr(y)\big) =
    d_{TV}\big(B^t_{\sigma(x)}(\hat{y}) \times \Pr_W(y \mid \hat{y}), 
    \Pr_B(\hat{y}) \times \Pr_W(y \mid \hat{y})\big)
  \end{split}
\end{align}

Now using Lemma~\ref{lem:prod} we can simplify the bound on the total variation
distance to be the total variation distance of the between-orbit distributions:
\begin{align}
    d_{TV}(P_x^t(y), \Pr(y)) = d_{TV}(B^t_{\sigma(x)}(\hat{y}), \Pr_B(\hat{y})).
\end{align}

Now, our goal is to upper-bound $d_{TV}(B^t_{\sigma(x)}, \Pr_B)$. To do this we will use a
standard \emph{coupling argument}. A \emph{coupling} is a way to run two copies of a
Markov chain $P$ at the same time with the following properties:
\begin{enumerate}[noitemsep]
\item Both copies in isolation evolve according to $P$;
\item If both copies are in the same state, they remain in the same state.
\end{enumerate}

Two coupled chains can be used to acquire upper-bounds on the total variation
distance of a Markov chain by upper-bounding the probability that a Markov chain
starting from two initial distributions -- one in its stationary distribution
and the other in an arbitrary location -- will \emph{coalesce} into the same
state:

\begin{lemma}[\citep{levin2017markov} Theorem 5.4]
  Let $P$ be a transition matrix on state-space $\Omega$ with stationary
distribution $\pi$. Let $\{(X_t, Y_t)\}$ be coupled chains that evolve according
to $P$ of length $t$, starting from an initial state $x \in
\Omega$ and $y \sim \pi$. Then,
  \begin{align}
    d_{TV}(P^t_x, \pi) \le \Pr(X_t \ne Y_t).
  \end{align}
  \label{lem:coupling}
\end{lemma}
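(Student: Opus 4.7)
The plan is to use the dual characterization $d_{TV}(\mu,\nu) = \sup_{A \subseteq \Omega} |\mu(A) - \nu(A)|$, which is equivalent to the $L^1$ form in Equation~\ref{eq:tv}, and then bound the discrepancy on every event by the probability that the coupled chains fail to coalesce.

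First I would identify the marginal laws induced by the coupling. Property~(1) of the coupling definition guarantees that $X$ in isolation evolves as $P$; since $X_0 = x$, we have $X_t \sim P^t_x$. Similarly $Y_0 \sim \pi$ together with $\pi P = \pi$ gives $Y_t \sim \pi$ for every $t \ge 0$. Property~(2), which enforces coalescence to be absorbing, is not needed for the marginals but is essential for the step below.

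Next, fix an arbitrary event $A \subseteq \Omega$ and decompose using the joint law of $(X_t, Y_t)$:
\begin{align*}
P^t_x(A) - \pi(A)
&= \Pr(X_t \in A) - \Pr(Y_t \in A) \\
&= \Pr(X_t \in A,\, Y_t \notin A) - \Pr(X_t \notin A,\, Y_t \in A) \\
&\le \Pr(X_t \in A,\, Y_t \notin A) \\
&\le \Pr(X_t \ne Y_t).
\end{align*}
The last step uses the containment $\{X_t \in A,\, Y_t \notin A\} \subseteq \{X_t \ne Y_t\}$: whenever the two chains occupy the same state their indicators for $A$ must agree. By the symmetric argument with $X_t$ and $Y_t$ swapped, the same bound holds for $\pi(A) - P^t_x(A)$, so $|P^t_x(A) - \pi(A)| \le \Pr(X_t \ne Y_t)$ uniformly in $A$. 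Taking the supremum over $A$ gives the claim.

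The main subtlety, if it can be called an obstacle, is the bridge between the two equivalent forms of total variation distance: Equation~\ref{eq:tv} states $d_{TV}$ as $\tfrac{1}{2}\sum_x |\mu(x) - \nu(x)|$, whereas the coupling argument is most natural in the set-supremum form. I would therefore either cite or briefly derive the standard equivalence $\tfrac{1}{2}\sum_x |\mu(x) - \nu(x)| = \sup_{A} |\mu(A) - \nu(A)|$, with the supremum attained at $A^\star = \{x : \mu(x) > \nu(x)\}$. Everything else is a two-line calculation using nothing beyond the defining axioms of a coupling.
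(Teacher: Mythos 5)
Your proof is correct, but there is nothing in the paper to compare it against: the paper imports this lemma verbatim from \citet{levin2017markov} with a citation and gives no proof of its own. Your argument is the standard textbook one (essentially the proof of the coupling characterization of total variation in Levin--Peres, specialized to the time-$t$ marginals): establish $X_t \sim P^t_x$ from property (1) and $Y_t \sim \pi$ from $Y_0 \sim \pi$ plus stationarity, then for any $A \subseteq \Omega$ bound $\Pr(X_t \in A) - \Pr(Y_t \in A) \le \Pr(X_t \in A,\, Y_t \notin A) \le \Pr(X_t \ne Y_t)$, symmetrize, and take the supremum over $A$, invoking the standard equivalence between the set-supremum and $L^1$ forms of $d_{TV}$. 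All of these steps are sound. One small inaccuracy worth fixing: you assert that property (2) of the coupling (coalesced chains remain together) is ``essential for the step below,'' but it is in fact used nowhere in your argument, nor is it needed --- the containment $\{X_t \in A,\, Y_t \notin A\} \subseteq \{X_t \ne Y_t\}$ is a pointwise fact about the joint law at the fixed time $t$, so the stated inequality holds for an arbitrary coupling of $P^t_x$ and $\pi$. Property (2) matters only downstream, in the paper's application of the lemma in Appendix~\ref{sec:mixing}: there, the bound $\Pr(X_t \ne Y_t) \le \left(\frac{|\Omega/\group|-1}{|\Omega/\group|}\right)^t$ relies on the fact that once both chains jointly propose and accept the maximal-probability state $M$ at some step $s \le t$, they still agree at time $t$, which is exactly the absorbing property. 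Correcting that attribution would make your write-up fully precise.
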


Now we define the
coupled chains $\{(X_t, Y_t)\}$.
Let $X_0 \in \Omega/\group$ be an arbitrarily chosen initial element, and let
$Y_0 \sim \Pr_B$ be an element chosen according to $\Pr_B$.
At each time step $t$, choose a state $o \in
\Omega/\group$ uniformly at random. Then, \emph{both} chains attempt to
transition to $o$, using the standard metropolis correction criteria to decide
whether or not to accept $o$. In order to guarantee coalescence, if both chains are in the same state, then we
define them to accept or reject a new state together. Intuitively, these two chains simulate the Markov
chain $B$ starting from different initial states, where they both share a common
source of randomness. Then by Lemma~\ref{lem:coupling},
\begin{align}
  d_{TV}(B^t_{X_0}, \Pr_B) \le \Pr(X_t \ne Y_t).
\end{align}

This probability can be upper bounded as follows. There exists a (possibly non-unique)
maximum probability state $M \in \Omega / \group$:
\begin{align*}
  M = \sigma\Big(\argmax_{x}~ \Pr(x) \times |\orb(x)|\Big).
\end{align*}
If both Markov chains
uniformly choose $M$ to transition to, then by the Metropolis rule they will
both accept and thus coalesce. Since the proposal is uniform, $\Pr(X_t \ne Y_t)$
is upper-bounded by the probability of not transitioning to $M$ after $t$ steps,
so:
\begin{align}
  \Pr(X_t \ne Y_t) \le \left( \frac{|\Omega / \group| - 1}{|\Omega / \group|} \right)^t,
\end{align}
which gives the first bound in the theorem. This quantity can be upper bounded
by a parameter $\varepsilon > 0$ representing the chosen error tolerance.
Solving for $t$:
\begin{align*}
  t \ge \log(\varepsilon) \times \left[ \log \left( \frac{|\Omega / \group| - 1}{|\Omega / \group|} \right)\right]^{-1}
\end{align*}
Using the identity:
\begin{align*}
  \log \left( \frac{x-1}{x} \right) = - \left( \frac{1}{x} + \frac{1}{2x^2} + \cdots \right),
\end{align*}
we then have that:
\begin{align}
  t \ge \log(\varepsilon^{-1}) \times |\Omega / \group|  \ge
  \log(\varepsilon^{-1}) \times \left( \frac{1}{|\Omega / \group|} + \frac{1}{2|\Omega / \group|^2} + \cdots \right)^{-1},
\end{align}
which gives the second bound and concludes the proof.

\end{document}